\theoremstyle{plain}
\newtheorem{theorem}{Theorem}[section]
\newtheorem{proposition}[theorem]{Proposition}
\theoremstyle{definition}
\theoremstyle{remark}
\newcommand{\ours}{\texttt{LaLiGAN}}
\icmltitlerunning{Latent Space Symmetry Discovery}
\begin{document}

\twocolumn[
\icmltitle{Latent Space Symmetry Discovery}




\begin{icmlauthorlist}
\icmlauthor{Jianke Yang}{ucsd}
\icmlauthor{Nima Dehmamy}{ibm}
\icmlauthor{Robin Walters}{ne}
\icmlauthor{Rose Yu}{ucsd}
\end{icmlauthorlist}


\icmlaffiliation{ucsd}{UCSD}
\icmlaffiliation{ibm}{IBM Research}
\icmlaffiliation{ne}{Northeastern University}

\icmlcorrespondingauthor{Rose Yu}{roseyu@ucsd.edu}

\icmlkeywords{Machine Learning, ICML}

\vskip 0.3in
]



\printAffiliationsAndNotice{}  

\begin{abstract}
Equivariant neural networks require explicit knowledge of the symmetry group. Automatic symmetry discovery methods aim to relax this constraint and learn invariance and equivariance from data. However, existing symmetry discovery methods are limited to simple linear symmetries and cannot handle the complexity of real-world data. 
We propose a novel generative model, Latent LieGAN (\ours{}), which can discover symmetries of nonlinear group actions.  It learns a mapping from the data space to a latent space where the symmetries become linear and simultaneously discovers symmetries in the latent space. Theoretically, we show that our model can express nonlinear symmetries under some conditions about the group action. Experimentally, 
we demonstrate that our method can accurately discover the intrinsic symmetry in high-dimensional dynamical systems. \ours{} also results in a well-structured latent space that is useful for downstream tasks including equation discovery and long-term forecasting. We make our code available at \href{https://github.com/jiankeyang/LaLiGAN}{https://github.com/jiankeyang/LaLiGAN}.
\end{abstract}

\section{Introduction}

\begin{figure}
\vspace{-1mm}
\centering
    \includegraphics[width=.75\linewidth, viewport=400 10 1510 1010, clip]{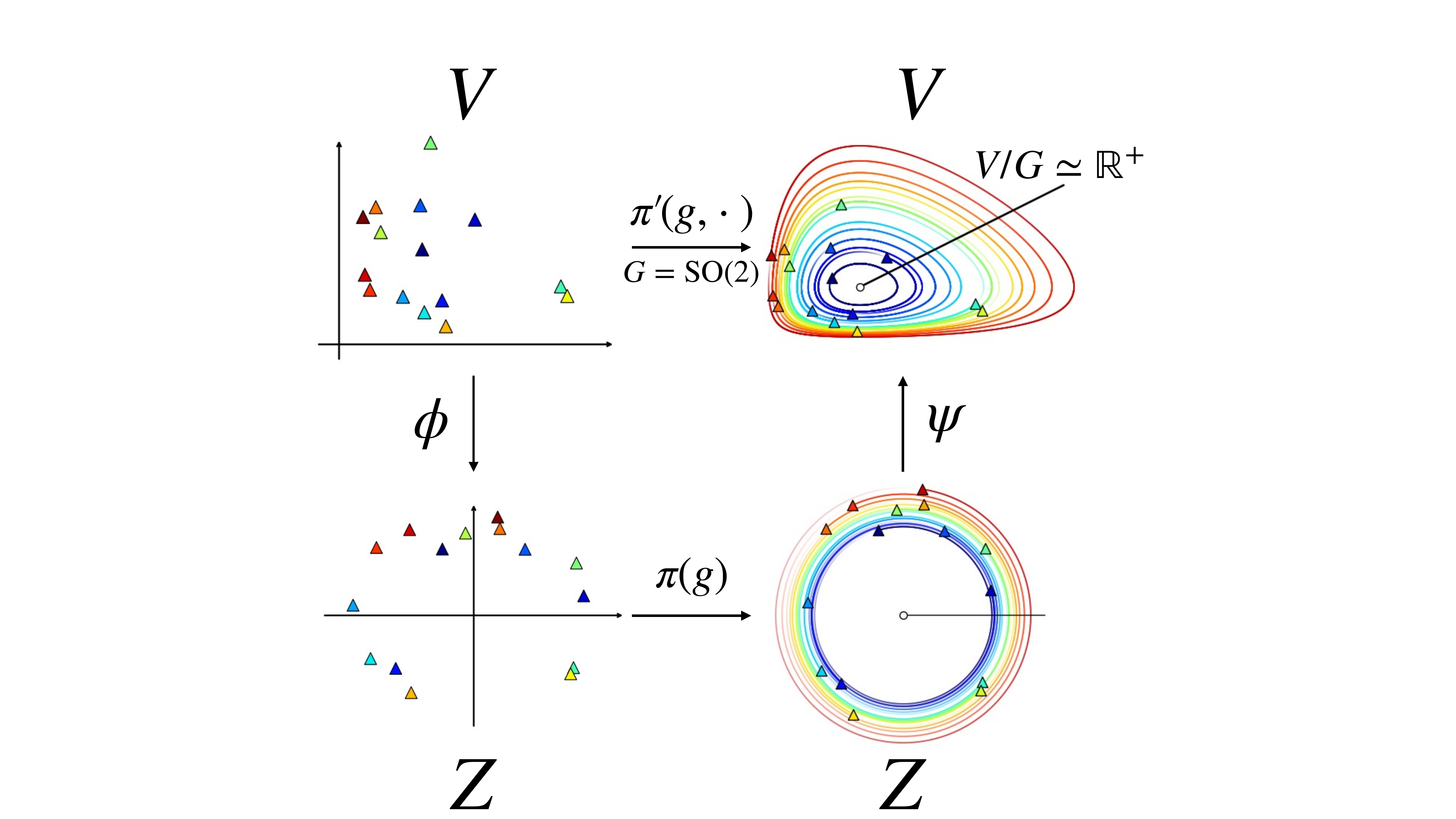}
    \caption{An example of $\mathrm{SO}(2)$ nonlinear group action $\pi'$ on $V=\mathbb R^2$ and its decomposition into an encoder $\phi$, a linear representation $\pi$ and a decoder $\psi$. Each trajectory is a group action orbit containing a random $v \in V$.}
    \label{fig:nga}
    \vspace{-4mm}
\end{figure}

Symmetry plays an important role in the success of deep neural networks \citep{bronstein2021geometric}. Many equivariant networks have been developed to enforce various symmetries in data from images to graphs \citep{e2cnn,gaugecnn,deepsets,lieconv,kondor2018generalization,cohen2019general,emlp,bspline-cnn}. However, a critical limitation of 
existing equivariant networks is that they require knowing the symmetry a priori. For complex real-world data, the underlying symmetries may be unknown or challenging to articulate through programming. For example, dynamical systems can evolve on a low-dimensional manifold with simple symmetries, but the actions of the symmetries become highly nonlinear on high-dimensional observations. Similarly, the action of $\mathrm{SO}(3)$ rotation become complicated on 2D images of 3D objects \cite{garrido2023sie}.

Recent years have seen exciting attempts towards automatic symmetry discovery from data \citep{lconv,moskalev2022liegg,msr,liegan},
but most of them search in only a limited space of symmetries, such as linear actions of discrete and continuous groups. Symmetry discovery is successful only when observations are measured in an ideal coordinate system with linear symmetry. Unfortunately, real-world data is complex and often contain nonlinear symmetries, such as high-dimensional dynamical systems \citep{sindyae}, or 2D images of 3D objects \citep{garrido2023sie}.

Another line of study focuses on learning equivariant representations from data \citep{SEN, yu2022rotationally, E-SSL, quessard2020learning}. These approaches learn a latent embedding space with a given symmetry. However, they still require prior knowledge about the symmetry in the latent space.
They also assume additional information about group transformation associated with each data point, which is not always available in practice.

In this work, we propose a novel generative modeling framework, \ours{}, for discovering symmetries of \textit{nonlinear} group actions. Our key insight is that  nonlinear group transformations can be decomposed into nonlinear mappings between data space and latent space, and a linear group representation in the latent space. Figure \ref{fig:nga} provides such an example. A nonlinear action of $\mathrm{SO}(2)$ on $V=\mathbb R^2$ corresponds to standard 2D rotation on latent vectors $z = \phi(v)$. After decomposition, we can adapt an existing symmetry discovery algorithm such as LieGAN \cite{liegan} to discover linear  symmetries in the latent space.
In the entire process, \ours{} learns both the symmetry group and its action on data. Additionally, when the symmetry group is already known, \ours{} can also be applied to learn the group equivariant representations, with the advantage of not requiring the knowledge of group elements associated with data samples.


The significance of latent space symmetry discovery is multi-fold. From the perspective of symmetry discovery, it further expands the search space of symmetries beyond linear group actions. For representation learning, learning a latent space in which symmetry becomes linear places a strong inductive bias on the structure of latent representations.
Such a simple latent structure proves to be useful in various downstream tasks, such as equation discovery and long-term forecasting in temporal systems. Furthermore, compared to equivariant representation learning, as the symmetry is no longer fixed but learnable, our method can discover latent spaces with previously unknown symmetries.


In summary, our main contributions include:
\begin{itemize}
    \item We develop \ours{}, a novel framework for discovering symmetries of nonlinear group actions.
    \item We provide the theoretical guarantee that \ours{} can approximate any nonlinear symmetry under some conditions about the group action.
    \item Our method can lead to well-structured latent spaces with interpretable symmetries in high-dimensional and nonlinear dynamical systems.
    \item The discovered symmetry can be used for equation discovery, leading to simpler equation forms and improved long-term prediction accuracy.
\end{itemize}
\section{Related Works}
\paragraph{Automatic symmetry discovery.} Automatic symmetry discovery aims to search and identify unknown symmetries in data. Current symmetry discovery techniques vary a lot in their search space for symmetries, such as learning discrete finite groups \citep{msr,karjol2023unified}, learning group subsets that represent the extent of symmetry within known groups \citep{augerino,p-gcnn,chatzipantazis2021trm}, and learning individual symmetry transformations on dataset distribution \citep{sgan}. Attempts have been made to discover general continuous symmetries based on Lie theory. For example, L-conv \citep{lconv} works with Lie algebra to approximate any group equivariant functions. LieGG \citep{moskalev2022liegg} extracts symmetry from a learned network from its polarization matrix. LieGAN \citep{liegan} proposes a general framework for discovering the symmetries of continuous Lie groups and discrete subgroups. These methods address general linear group symmetry in the data, which is the largest search space so far. Our work further expands the search space to non-linear symmetries.

\vspace{-1mm}
\paragraph{Learning equivariant representation.} Instead of working in the data space where symmetry transformations can be complicated, many works use autoencoders to learn a latent space with pre-specified symmetries \citep{hinton2011transforming, falorsi2018hvae}.
Among recent works, \citet{yu2022rotationally, SEN} learn equivariant features that can be used for downstream prediction tasks.
\citet{shakerinava2022structuring, E-SSL} use contrastive losses to learn equivariant representations in a self-supervised manner.
\citet{caselles2019symmetry, quessard2020learning, marchetti2022equivariant} focus on learning disentangled representations that are highly interpretable.
\citet{winter2022unsupervised, Wieser2020stib} split the latent space into group-invariant and equivariant subspaces.
While the emphases of these works vary, the common assumption is that \textit{we already know the symmetry group a priori}. Many works also assume additional information such as group element associated with each data point \citep{garrido2023sie} or paired samples under transformations \citep{shakerinava2022structuring}. Our goal is more ambitious: design a model to simultaneously learn symmetries and the corresponding equivariant representations in latent space with minimal supervision.

\vspace{-1mm}
\paragraph{Discovering governing equations.} Latent space discovery of governing equations is first introduced in SINDy Autoencoder \citep{sindyae}, which combines the sparse regression technique for equation discovery in \cite{sindy} and an autoencoder network to explore coordinate transformations that lead to parsimonious equations. Several variants of this method have been developed to improve accuracy and robustness to noise \citep{sindy-pi, weak-sindy, ensemble-sindy}. However, due to the absence of physical constraints, their discovered equations may not respect some physical properties such as isotropy and energy conservation.
We highlight this field as an important application of our symmetry discovery method, where enforcing symmetry can regularize the latent space and improve the performance of equation discovery models.
\section{Representation vs Nonlinear Group Action}
Equivariant neural nets build on the notion of symmetry groups and their transformations on data. Given a vector space $V$, a group $G$ transforms $v \in V$ via a group action $\pi: G \times V \rightarrow V$ which maps the identity element $e$ to identity transformation, i.e. $\pi(e, v) = v$, and is compatible with group composition, i.e. $\pi(g_1, \pi(g_2, v)) = \pi(g_1g_2, v)$.

Many existing equivariant networks assume that the group acts linearly on the input vector space. Examples include $\mathrm{E}(2)$ symmetry acting on planar image signals \cite{e2cnn}, and $\mathrm{SO}(3)$ symmetry acting on spherical signals \cite{spherecnn}. In these cases, the linear group action is called a group representation. The group representation is defined as a map $\rho: G \rightarrow \mathrm{GL}(n)$ where $\rho(g) \in \mathbb R^{n\times n}$ is an invertible matrix that transforms any vector $v \in \R^n$ by matrix multiplication. Given the group representations on the input and the output spaces, a $G$-equivariant network $f:X \rightarrow Y$ needs to satisfy $\rho_Y (g) f(x) = f(\rho_X(g) x)$. A special case of equivariance is invariance, where the group action on the output space is trivial, i.e. $\rho_Y(g) = \mathrm{id}$.

Equivariant networks with such linear symmetry transformations have several limitations. It is not always possible to find a linear action of the group on the data, e.g. the action of $\mathrm{SO}(3)$ on 2D images of 3D objects. Also, we may not even know the symmetry group $G$, so learning equivariant representations for known groups is also not an option.

Our goal is to discover both the \textbf{symmetry group} and its \textbf{nonlinear group action} on the data. Concretely, given the input and output data space $X \subseteq \mathbb R^n,\ Y \subseteq \mathbb R^m$, and the data samples $(x_i,y_i) \in X \times Y$ with an underlying function $y=f(x)$, we want to find a group $G$ and its nonlinear actions $\pi'_X: G \times X \rightarrow X$ and $\pi'_Y: G \times Y \rightarrow Y$ such that $\pi'_Y (g, f(x)) = f(\pi'_X(g, x))$. We denote nonlinear group actions as $\pi'_\cdot$ to distinguish them from group representations.
In the following sections, we will also refer to group representations and nonlinear group actions as linear symmetries and nonlinear symmetries. 

We use the theory of Lie groups to describe the continuous symmetry groups of data. We provide some preliminaries about Lie groups and their representations in Appendix \ref{sec:lie-prelim}.

\section{\texttt{LaLiGAN}: Discovering Nonlinear Symmetry Transformations}

\subsection{Decomposing the Nonlinear Group Action} \label{sec:decomp}

Our major goal is to model a nonlinear action of a group $G$ on a data manifold $\mc{M}$: $\pi': G \times \mc{M} \rightarrow \mc{M}$. We adopt the manifold hypothesis \cite{bengio2013representation} which states that high-dimensional data dwell in the vicinity of a low-dimensional manifold embedded in the high-dimensional vector space, i.e. $\mc{M} \subseteq V = \mathbb R^n$. If we use a neural network $f_\theta$ to directly approximate this function, it cannot guarantee the identity and compatibility conditions for proper group action, i.e. $f_\theta(\mathrm{id}, x) = x$ and $f_\theta (g_1, f_\theta (g_2, x))=f_\theta (g_1g_2, x)$. Instead, we propose to decompose the nonlinear group action as nonlinear maps and a linear group representation. Concretely, we represent any nonlinear group action $\pi': G \times \mc{M} \rightarrow \mc{M}$ as
\begin{equation}
\pi'(g, \cdot) = \psi \circ \pi(g) \circ \phi,
\label{eq:decomp}
\end{equation}
where $\phi: V \rightarrow Z$ and $\psi: Z \rightarrow V$ are functions parametrized by neural networks, and $\pi(g): G \rightarrow \mathrm{GL}(k)$ is a group representation acting on the latent vector space $Z=\mathbb R^k$, where $k$ is a hyperparameter. 

Intuitively, the decomposition \eqref{eq:decomp} projects the data to a latent space where the symmetry group acts linearly, and lifts the transformed latent vector back to the input space.
We provide theoretical guarantees for the expressivity of such a decomposition. Theorem \ref{thm:uanga} indicates that our proposed decomposition and neural network parametrization can approximate nonlinear group actions under certain conditions.

\begin{figure*}[t]
    \centering
    
    \begin{subfigure}{.8\textwidth}
    \includegraphics[width=\textwidth, viewport=100 280 1800 1000, clip]{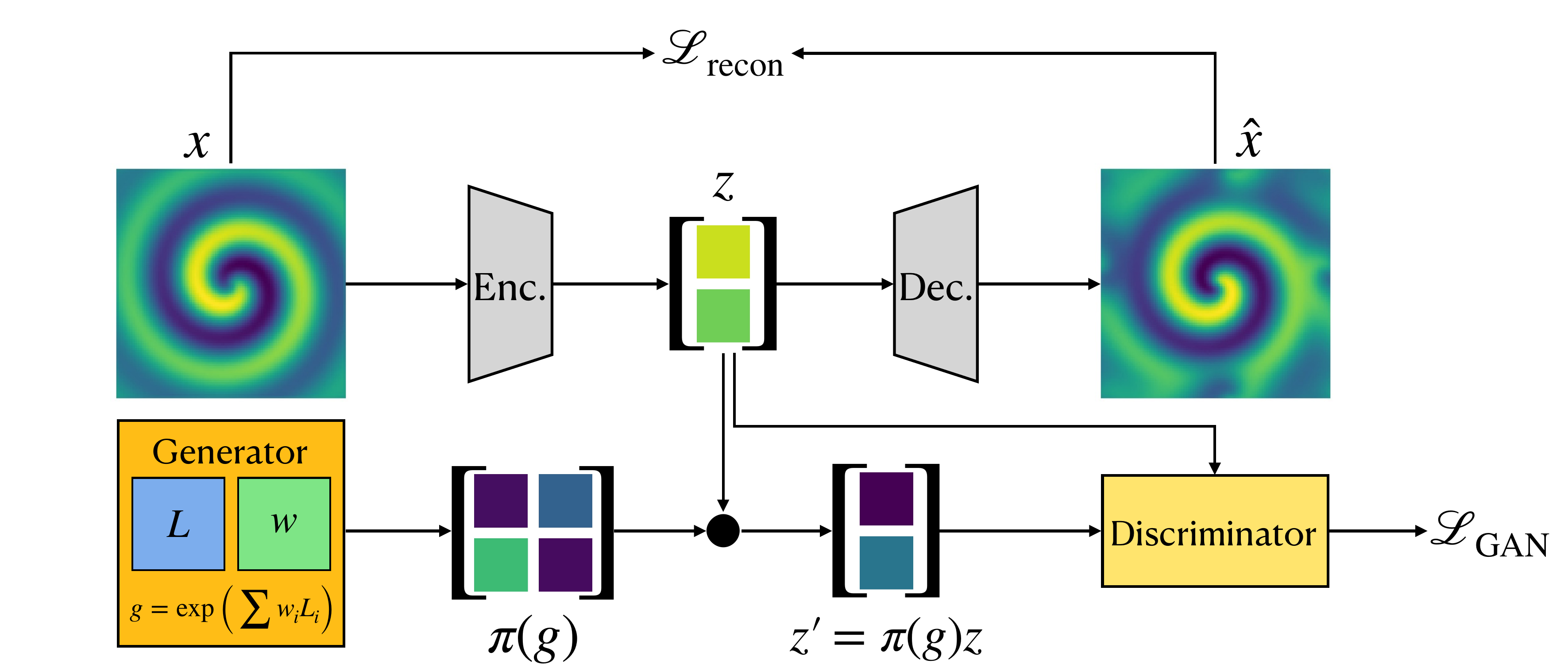}
    \end{subfigure}
    \caption{Overview of the proposed \ours{} framework. The encoder maps the original observations to a latent space. The latent representation is transformed with the linear group action from the generator. The decoder reconstructs the inputs from original and transformed representations. The discriminator is trained to recognize the difference between the original and the transformed samples.}
    \label{fig:structure}
    \vspace{-2mm}
\end{figure*}

\begin{theorem}[Universal Approximation of Nonlinear Group Action]\label{thm:uanga}
    Let $G\leq\mathrm{GL}(k; \mathbb R)$ be a compact Lie group that acts smoothly, freely and properly via a continuous group action $\pi':G\times \mc{M}\rightarrow \mc{M}$, where the data manifold $\mc{M}$ is a compact subset of $V=\mathbb R^n$. The group action, restricted to any bounded subset of the group, can be approximated by the decomposition $\pi'(g,\cdot)\approx\psi\circ\pi(g)\circ\phi$ if it admits a simply connected orbit space $\mc{M}/G$, where $\psi$ and $\phi$ are fixed arbitrary-width neural networks with one hidden layer, and $\pi$ is a linear group representation.
\end{theorem}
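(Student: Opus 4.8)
The plan is to produce an \emph{exact} decomposition $\pi'(g,\cdot)=\psi\circ\pi(g)\circ\phi$ with continuous (indeed smooth) maps $\phi:V\to Z$, $\psi:Z\to V$ and a genuine linear representation $\pi$, and only at the very end to replace $\phi$ and $\psi$ by one-hidden-layer networks using the universal approximation theorem. The whole problem then reduces to constructing a single object: a $G$-\emph{equivariant embedding} $\phi:\mc{M}\hookrightarrow Z$ into some finite-dimensional linear representation $(\pi,Z)$ of $G$, i.e.\ $\phi\circ\pi'(g,\cdot)=\pi(g)\circ\phi$ for all $g$. Indeed, once such a $\phi$ is available its image $\phi(\mc{M})$ is automatically $G$-invariant inside $Z$, since $\pi(g)\phi(v)=\phi(\pi'(g,v))\in\phi(\mc{M})$; we may then take \emph{any} continuous $\psi:Z\to V$ that restricts to $\phi^{-1}$ on the compact submanifold $\phi(\mc{M})$, which exists by the Tietze/Whitney extension theorem (concretely, compose the nearest-point retraction of a tubular neighborhood with $\phi^{-1}$). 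With $\psi\circ\phi=\mathrm{id}_{\mc{M}}$ and $\phi\circ\psi=\mathrm{id}$ on the invariant set $\phi(\mc{M})$, the group-action axioms follow directly: $\psi\circ\pi(e)\circ\phi=\psi\circ\phi=\mathrm{id}_{\mc{M}}$, and for $v\in\mc{M}$ the point $\pi(g_2)\phi(v)=\phi(\pi'(g_2,v))$ lies in $\phi(\mc{M})$, so $\psi\circ\pi(g_1)\circ\phi\circ\psi\circ\pi(g_2)\circ\phi$ applied to $v$ equals $\psi(\pi(g_1g_2)\phi(v))=\pi'(g_1g_2,v)$. Hence the decomposition is a bona fide group action and coincides with $\pi'$ on $\mc{M}$.

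To build the equivariant embedding I would exploit the quotient structure supplied by the hypotheses. Since $G$ acts smoothly, freely and properly on the compact manifold $\mc{M}$, the quotient manifold theorem makes $\mc{M}/G$ a compact smooth manifold and the projection $q:\mc{M}\to\mc{M}/G$ a principal $G$-bundle. The hypothesis that $\mc{M}/G$ is simply connected is then invoked to \emph{trivialize} this bundle, yielding a $G$-equivariant diffeomorphism $\mc{M}\cong(\mc{M}/G)\times G$ under which $G$ acts only on the second factor by translation. Now embed the two factors separately: Whitney's theorem gives an embedding $\mc{M}/G\hookrightarrow\mathbb{R}^{p}$, and the given faithful inclusion $G\leq\mathrm{GL}(k;\mathbb{R})$ gives an embedding $G\hookrightarrow\mathbb{R}^{k\times k}$; since left multiplication $A\mapsto gA$ on $\mathbb{R}^{k\times k}\cong\mathbb{R}^{k^2}$ is linear (it is the direct sum of $k$ copies of the defining representation), the product map $\mc{M}\cong(\mc{M}/G)\times G\to Z:=\mathbb{R}^{p}\oplus\mathbb{R}^{k^2}$, with $G$ acting trivially on $\mathbb{R}^{p}$ and by left multiplication on $\mathbb{R}^{k^2}$, is the desired $G$-equivariant embedding; being a continuous injection out of a compact space, it is automatically a homeomorphism onto its image. (Alternatively, one could bypass the trivialization step by invoking the Mostow--Palais equivariant embedding theorem, which directly embeds $\mc{M}$ equivariantly into an orthogonal representation of $G$.)

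Finally I would discretize. Given $\varepsilon>0$, the universal approximation theorem for arbitrary-width one-hidden-layer networks yields $\hat\phi$ with $\|\hat\phi-\phi\|_\infty<\varepsilon$ on $\mc{M}$ and $\hat\psi$ with $\|\hat\psi-\psi\|_\infty<\varepsilon$ on a fixed compact set $K$, e.g.\ a bounded neighborhood of $\bigcup_{g\in B}\pi(g)\phi(\mc{M})$; this set is compact because $\phi(\mc{M})$ is and $\pi(B)$ lies in the compact group $\pi(G)$, where $B\subseteq G$ is the prescribed bounded subset (for the compact $G$ of the hypothesis one may simply take $B=G$). Writing $\pi'(g,v)=\psi(\pi(g)\phi(v))$, setting $C:=\sup_{g\in B}\|\pi(g)\|_{\mathrm{op}}<\infty$, and letting $\omega_\psi$ be a modulus of continuity for $\psi$ on $K$, the triangle inequality bounds $\|\hat\psi(\pi(g)\hat\phi(v))-\pi'(g,v)\|$ by $\varepsilon+\omega_\psi(C\varepsilon)$ uniformly over $(g,v)\in B\times\mc{M}$; this tends to $0$ with $\varepsilon$, which both proves the claimed approximation and explains why the statement is phrased for bounded subsets of the group (the bound on $C$ and the compactness of $K$ would fail for an unbounded part of a noncompact $G$).

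I expect the genuinely delicate step to be the topological one: obtaining a global, $G$-equivariant ``coordinate system'' on $\mc{M}$, i.e.\ splitting the principal bundle $\mc{M}\to\mc{M}/G$ equivariantly. This is where ``free'', ``proper'', and ``simply connected orbit space'' are actually used, and it is the part most in need of care --- one must check that the available topological input on the orbit space really does suffice to trivialize (or at least equivariantly section) the bundle. Everything downstream --- the Whitney and Tietze extensions, the verification of the group-action axioms, and propagating the network approximation error through the composition --- is routine once that equivariant embedding is in hand.
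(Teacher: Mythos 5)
Your proposal takes essentially the same route as the paper's proof: use the quotient manifold theorem together with simple connectivity of $\mathcal{M}/G$ to split $\mathcal{M}$ equivariantly as $(\mathcal{M}/G)\times G$, embed the orbit space by Whitney with trivial action and $G$ via its defining matrices with left multiplication acting linearly on $\mathbb{R}^{k^2}$ (the paper's $\pi(g)=(I_k\otimes\rho(g))\oplus I_p$), then extend by Tietze and invoke the universal approximation theorem for encoder and decoder, propagating the error through the composition via Lipschitz/modulus-of-continuity bounds over the bounded subset of group elements. The differences are only presentational --- the paper builds the global section $s':\mathcal{M}/G\to\mathcal{M}$ explicitly by path lifting and checks continuity of the resulting coordinates by hand, while you phrase the same content as a $G$-equivariant embedding (mentioning Mostow--Palais as an alternative), and the ``delicate'' trivialization step you flag is exactly the point where the paper likewise relies on simple connectivity.
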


\vspace{-1mm}
\textbf{Proof Sketch.} We construct a mapping $\mc M \to \mc M / G \times G$ for any $v \in \mc M$. Based on this mapping, we define a continuous function $\alpha$ from the data manifold to the latent vector space. $\alpha$ can be continuously extended to the ambient space $V = \R ^n$, so it can be approximated by a neural network according to the Universal Approximation Theorem. An inverse mapping $\beta: Z \to \mc M$ and its neural network approximation $\psi$ can be constructed similarly. Full proof is deferred to \cref{sec:uanga}.

\vspace{-1mm}
\subsection{Training Objective for Latent Symmetry Discovery}

Note that \eqref{eq:decomp} alone is not a valid definition of the group action on $\mc M$. In this section, we propose our model architecture and training objective to learn proper symmetries. The proof of \cref{thm:uanga} via constructing a pair of inverse mappings provides insights into how to make \eqref{eq:decomp} satisfy the group action axioms. Concretely,
\begin{proposition}\label{prop:nga}
    $\pi'(g,\cdot) = \psi \circ \pi(g) \circ \phi \big|_\mc{M}$ is a group action on $\mc{M}$ if (1) $\phi \big|_\mc{M}$ is the right-inverse of $\psi$, and (2) the image of $\mc M$ under $\phi$ is invariant under the action $\pi$ of $G$, i.e. $G\phi[\mc M] = \phi[\mc M]$.
\end{proposition}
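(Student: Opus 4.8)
The plan is to verify the two group action axioms — identity and compatibility — directly from the definition $\pi'(g,\cdot) = \psi \circ \pi(g) \circ \phi\big|_{\mc M}$, using the two hypotheses as the only extra ingredients. First I would check that $\pi'$ is well-defined as a map $G \times \mc M \to \mc M$: given $v \in \mc M$, we have $\phi(v) \in \phi[\mc M]$, then $\pi(g)\phi(v) \in G\phi[\mc M] = \phi[\mc M]$ by hypothesis (2), so $\pi(g)\phi(v) = \phi(w)$ for some $w \in \mc M$, and finally $\psi(\phi(w)) = w \in \mc M$ using that $\phi\big|_{\mc M}$ is a right-inverse of $\psi$ (i.e. $\psi \circ \phi\big|_{\mc M} = \mathrm{id}_{\mc M}$). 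So the image lands in $\mc M$.

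Next, the identity axiom: $\pi'(e, v) = \psi(\pi(e)\phi(v)) = \psi(\phi(v)) = v$, since $\pi(e) = \mathrm{id}$ as $\pi$ is a representation, and then $\psi \circ \phi\big|_{\mc M} = \mathrm{id}_{\mc M}$. For compatibility, I would compute $\pi'(g_1, \pi'(g_2, v)) = \psi\big(\pi(g_1)\,\phi(\psi(\pi(g_2)\phi(v)))\big)$. The key move is to show the inner $\phi \circ \psi$ acts as the identity on the relevant argument: by the argument above, $\pi(g_2)\phi(v) = \phi(w)$ for some $w \in \mc M$, so $\phi(\psi(\pi(g_2)\phi(v))) = \phi(\psi(\phi(w))) = \phi(w) = \pi(g_2)\phi(v)$, again using the right-inverse property on $w \in \mc M$. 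Substituting back, $\pi'(g_1,\pi'(g_2,v)) = \psi\big(\pi(g_1)\pi(g_2)\phi(v)\big) = \psi\big(\pi(g_1 g_2)\phi(v)\big) = \pi'(g_1 g_2, v)$, where the middle equality is the homomorphism property of the representation $\pi$.

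The main subtlety — the step I expect to require the most care — is precisely the cancellation of the interior $\phi \circ \psi$: note that $\phi \circ \psi$ need \emph{not} be the identity on all of $Z$, only $\psi \circ \phi$ is the identity on $\mc M$. The argument goes through only because hypothesis (2) guarantees that every element of the form $\pi(g)\phi(v)$ lies in $\phi[\mc M]$, i.e. equals $\phi(w)$ for some $w \in \mc M$, and on such points $\psi$ followed by $\phi$ recovers the point: $\phi(\psi(\phi(w))) = \phi(\mathrm{id}_{\mc M}(w)) = \phi(w)$. So the role of the invariance condition $G\phi[\mc M] = \phi[\mc M]$ is twofold — it keeps the orbit inside the "good" set $\phi[\mc M]$ where $\phi$ and $\psi$ are mutually inverse, and it is exactly what makes both well-definedness and compatibility work. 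I would state this observation as a short lemma ("$\phi \circ \psi$ restricted to $\phi[\mc M]$ is the identity") before assembling the axiom checks, to keep the two computations clean.
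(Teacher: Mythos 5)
Your proof is correct and follows essentially the same route as the paper: verify the identity and compatibility axioms, with the crux being that $\phi\circ\psi$ acts as the identity on $\phi[\mc M]=G\phi[\mc M]$ so the interior $\phi\circ\psi$ cancels in the compatibility computation. The only (cosmetic) difference is that you obtain this cancellation directly by writing $\pi(g)\phi(v)=\phi(w)$ and using $\psi(\phi(w))=w$, whereas the paper phrases it via injectivity of $\psi$ on $G\phi[\mc M]$; your version is, if anything, slightly cleaner, and your explicit well-definedness check is a welcome addition.
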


The condition that $\phi$, when restricted to the data manifold, is the right-inverse of $\psi$ implies that they form an autoencoder that maps between the input vector space and the latent space. In practice, we train the networks $\phi$ and $\psi$ with a reconstruction loss $\mc L_\text{recon}=\mathbb{E}_{v\sim p_\mc{M}}\|\psi(\phi(v))-v\|^2$ to enforce this condition. With the manifold hypothesis, even when the ambient space $V$ has higher dimensionality than the latent space, it is still possible to find the bijective mappings between the data points and the latent embeddings.

To enforce the second condition, i.e. the invariance of the data manifold projection onto the latent space under group action, we apply the approach of LieGAN \cite{liegan} to our latent space. Concretely, we use a symmetry generator to generate linear transformations $\pi(g)$ on the latent vectors. The discriminator is trained to distinguish the original data distribution and the transformed distribution in the latent space. Through adversarial training, the generator learns to produce group actions that preserve the data distribution, i.e. $p_{\mc M}(\phi(v)) \approx p_{\mc M}(\pi(g)\phi(v))$. If the supports of the two distributions agree, the second condition $G\phi[\mc M] = \phi[\mc M]$ is fulfilled. Thus, we use the following training objective to discover symmetry from the data:
\begin{eqnarray}
    &\mc L_\text{total}=w_\text{GAN}\cdot \mc L_\text{GAN}+w_\text{recon}\cdot \mc L_\text{recon}, \\ &\mc L_\text{recon}=\mathbb{E}_{v}\|(\psi(\phi(v))-v\|^2,  \\
    &\mc L_\text{GAN}=\mathbb{E}_{v,g}\Big[\log D(\phi(v))+ \log(1- D(\pi(g)\phi(v))\Big] \label{eq:gan-loss}
\end{eqnarray}
where $D$ is the discriminator, $\pi(g)$ is a linear representation sampled from the generator, and $\phi$ and $\psi$ are neural networks that compose the nonlinear group action with $\pi(g)$. The discriminator, the generator and the autoencoder are jointly optimized under $\mc L_\text{total}$. The loss weighting coefficients $w_\text{GAN}$ and $w_\text{recon}$ are selected based on specific tasks. Figure \ref{fig:structure} shows the overall pipeline of our framework.

To discover equivariance from data, we concatenate the input-output pair of the function as $v=(x,y)$ and let the group act on the concatenated vector by $\pi'(g, v) \coloneqq (\pi'_X(g, x), \pi'_Y(g, y))$, where $\pi'_X(g, \cdot) = \psi \circ \pi_X(g) \circ \phi_X$ is the nonlinear action on the function input space $X$ and $\pi'_Y$ the action on the output space $Y$.
In some tasks such as the dynamical systems considered in \cref{sec:exp-dynamics}, we assume the group action is the same on $X=Y=\R^n$. In this case, we only need to learn a single group action for both $X$ and $Y$.

We should also note that while the above objective encourages our model to conform to the conditions in \cref{prop:nga}, it is difficult to strictly satisfy these properties. In practice, even when these conditions do not hold perfectly, we can still learn a mostly valid group action with reasonably small violations to identity and compatibility axioms. We show an example in \cref{sec:g-action-approx}.

\vspace{-1mm}
\subsection{The Symmetry Generator}
The discriminator and the autoencoder can be instantiated as standard neural architectures, such as MLP between two vector spaces. Here, we discuss how to instantiate the symmetry generator. We use the generator to model a group $G \leq\mathrm{GL}(k)$ which acts on the latent space $\R^k$ via its standard representation $G \to \R^{k\times k}$. Similar to \citet{liegan}, our generator learns a Lie algebra basis $\{ L_i \in \R^{k \times k}\}$ and generates the standard representations of group elements by sampling the linear combination coefficients $w_i\in\R$ for the basis:
\begin{equation}
    w_i\sim\gamma(w),\ \pi(g)=\exp\big[\sum_iw_iL_i\big]
    \label{eq:sym-gen}
\end{equation}
where $\gamma$ is a distribution (e.g. Gaussian) for the coefficients and $\exp$ denotes the matrix exponential. As the Lie algebra basis $\{L_i\}$ uniquely determines the structure of the Lie group, we can learn the symmetry group by learning these $L_i$ via standard gradient-based optimization techniques.

Then, the symmetry generator \eqref{eq:sym-gen} samples random group elements that transform the latent projections of the data points $v_i=(x_i,y_i)$. We use the original and the generator-transformed latent embeddings as the ``real'' and ``fake'' samples for the discriminator. As the generator produces linear representations acting on the latent space, we name our method Latent LieGAN (\ours{}).

\subsection{Structuring the Latent Space}\label{sec:latent-structure}
The latent space produced by the encoder can be largely arbitrary, even with the GAN loss \eqref{eq:gan-loss} that promotes symmetry. Here, we introduce several techniques to endow the latent space with desirable structures for symmetry discovery.

\paragraph{Disentangled representation.}
Latent space representations may capture different aspects of the observations. Consider an image of $N$ 3D objects as an example. A possible latent representation consists of the orientation of each object $r_o \in \mathbb R^{3N}$, the camera perspective $r_c \in \mathbb R^3$, light intensity $i \in \mathbb R^+$, etc. Each component can be transformed by a separate group action, independent of each other. For these scenarios, we provide the option to specify how the latent space is decomposed as independent subspaces, i.e. $Z=\oplus_{i=1}^N Z_i$, each of which is acted on by a symmetry group $G_i$. This avoids searching in the unnecessarily large space of group actions with no nontrivial invariant subspace.
This aligns with the notion of disentangled representation in \citet{higgins2018disentangled}. We discuss the relation and difference between our method and symmetry-based disentangled representation learning in Appendix \ref{sec:flatland}.

\begin{figure}[h]
  \centering
  \begin{subfigure}{0.27\textwidth}
  \includegraphics[width=\linewidth, viewport=30 -30 550 320, clip]{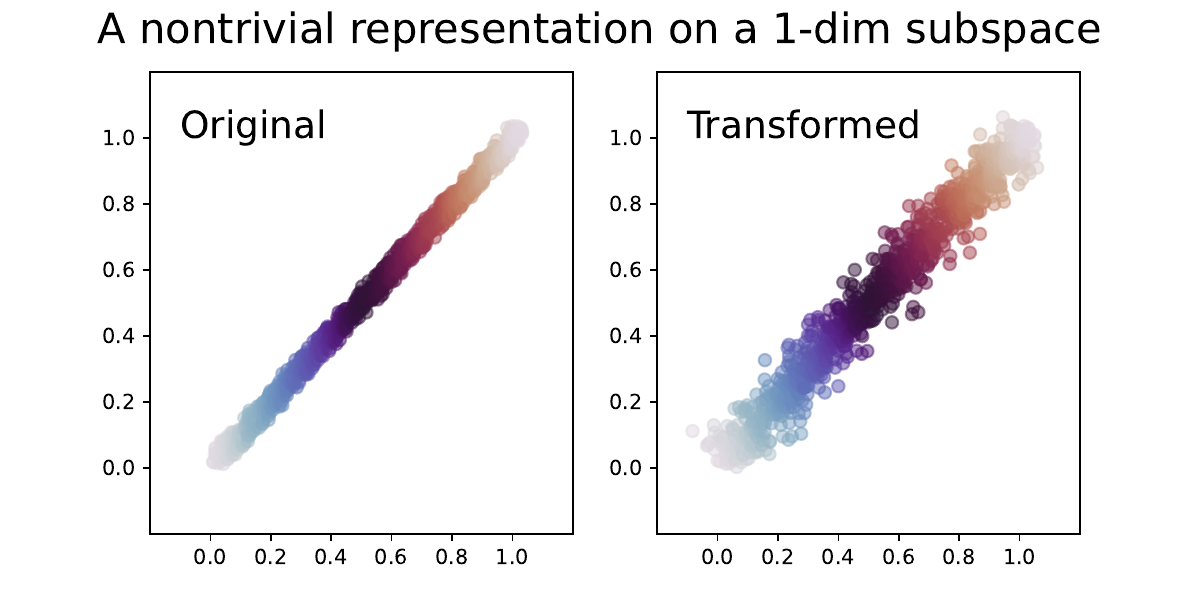}
  \end{subfigure}
  \begin{subfigure}{0.185\textwidth}
  \includegraphics[width=\linewidth, viewport=30 22 450 360, clip]{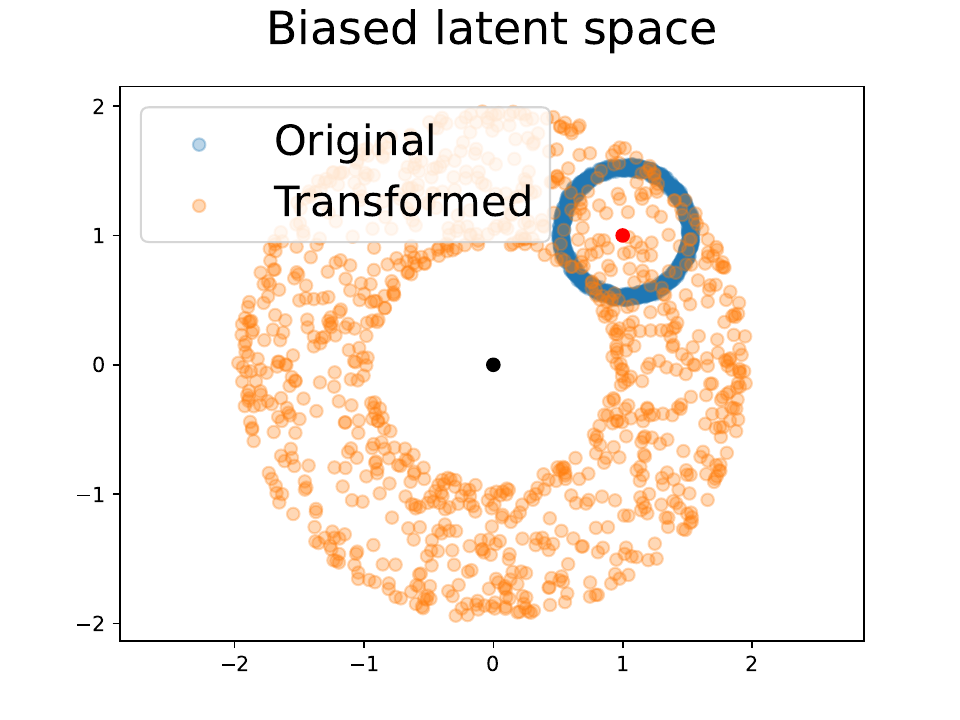}
  \label{fig:fail-bias}
  \end{subfigure}
  \caption{Potential failure modes in latent space symmetry discovery. (a) Fallacious symmetry in low-dimensional subspace. (b) Absence of symmetry in a biased latent space.}
  \label{fig:failure}
\end{figure}

\paragraph{Regularizing the latent structure.}\label{sec:latent-reg}
We observe that the learned latent space can sometimes lead to fallacious symmetry or no symmetry at all. We propose regularization techniques to address a few failure modes caused by undesirable latent space structures. 

First, the latent representations tend to collapse to a low-dimensional subspace where nontrivially parametrized group representations can act as identity. Such a fallacious symmetry provides an easy workaround for the symmetry generator. This happens in \cref{fig:failure} (left), where the transformations generated by \(L = \begin{bmatrix}
    2 & -2 \\ -1 & 1
\end{bmatrix}\) leave the latent representations in a 1D subspace approximately unchanged. This is undesirable because we want the symmetry generator to learn nontrivial transformations. In practice, we use an orthogonal parametrization in the final linear layer of the encoder to enforce a different output in each dimension. This is implemented in \texttt{PyTorch} by computing a product of Householder reflectors to obtain orthonormal rows in the weight matrix.

Another failure mode occurs when the latent representations are not centered at the origin. The linear group representation $v\mapsto \pi(g)v$ implicitly assumes that the vector space is centered at the origin and cannot describe the symmetry otherwise. \cref{fig:failure} (right) provides an example of a circular latent space centered at $(1,1)$. Directly applying the $\mathrm{SO}(2)$ transformations changes the distribution. Thus, we enforce the center property by normalizing each batch of data to have zero means before applying the transformations from the symmetry generator.

\subsection{Use Cases of Latent Symmetry Discovery}
While the main goal of our method is to discover nonlinear symmetries, it can also be adapted for related purposes. We present several use cases of \ours{} as follows.
\vspace{-1mm}
\paragraph{Learning equivariant representation.}
Learning equivariant representation can be viewed as a special case of our method, where the symmetry group $G$ and its representation $\pi$ are known. Our encoder $\phi$ then becomes a $G$-equivariant function in the sense that 
\begin{equation}
    \phi(\pi'(g,x)) = \phi((\psi\circ\pi(g)\circ\phi)(x)) = \pi(g)\phi(x)
\end{equation}
In other words, by fixing $\pi$ to a known group representation, our method learns a $G$-equivariant representation $z=\phi(x)$. Compared to other methods, \ours{} can learn equivariant representation without any knowledge of the group transformation associated with each data sample.

\paragraph{Joint discovery of governing equation.}
\ours{} is analogous to latent space equation discovery techniques \citep{sindyae} in terms of using an autoencoder network for nonlinear coordinate transformations. We can use the latent space learned by \ours{} for discovering equations. Concretely, if we want to find a latent space governing equation parameterized by $\theta$: $\dot z=F_\theta(z)$, where $z=\phi(x)$ is obtained from our encoder network, we fix the encoder $\phi$ and optimize $\theta$ with the objective
$
    l_\text{eq}=\mathbb E_{x,\dot x}\|(\nabla_xz)\dot x-F_\theta(z)\|^2
$.

While equation discovery and symmetry discovery are two seemingly distinct tasks, we will show in the experiment that learning a symmetric latent space can significantly improve the quality of the discovered equation in terms of both its simplicity and its long-term prediction accuracy.
\section{Latent Symmetry in Dynamical Systems}\label{sec:exp-dynamics}
In this section, we investigate some dynamical systems with complicated symmetries due to high-dimensional observation space or nonlinear evolution. We show that \ours{} can learn linearized symmetries in the latent space.

\subsection{Datasets}

\paragraph{Reaction-diffusion.}
Many high-dimensional datasets in practical engineering and science problems derive from dynamical systems governed by partial differential equations. These systems often do not exhibit simple linear symmetries in the observation space, but their dynamics might evolve on a low-dimensional manifold with interesting symmetry properties. As an example, we consider a $\lambda-\omega$ reaction-diffusion system \citep{sindyae} governed by
\begin{align*}
    u_t=&(1-(u^2+v^2))u+\beta(u^2+v^2)v+d(u_{xx}+u_{yy})\cr
    v_t=&-\beta(u^2+v^2)u+(1-(u^2+v^2))v+d(u_{xx}+u_{yy})
\end{align*}
with $d=0.1$ and $\beta=1$. We discretize the 2D space into a $100\times100$ grid, which leads to an input dimension of $10^4$. Figure \ref{fig:rd-original} displays a few snapshots of this system. We simulate the system by $6000$ timesteps with step size $0.05$.

The reaction-diffusion system is an example of low-dimensional latent symmetry in high-dimensional observations. In fact, the absence of linear symmetry is not exclusive to high-dimensional systems. We also investigate two low-dimensional dynamics, where their nonlinear evolution prevents any kind of linear symmetry, but our method can still discover meaningful symmetries in the latent space.

\paragraph{Nonlinear pendulum.}
The movement of a simple pendulum can be described by $\dot q=p,\ \dot p=-\omega^2\sin(q)$, with $\omega$ being the natural frequency and $q$ and $p$ the angular displacement and angular momentum. In our experiment, we use $\omega=1$. We simulate $N=200$ trajectories up to $T=500$ timesteps with $\Delta t=0.02$.

\paragraph{Lotka-Volterra System.}
The Lotka-Volterra equations are a pair of nonlinear ODEs that characterize the dynamics of predator-prey interaction. We consider the canonical form of the equations,
$
    \dot p = a - b e ^ q , \ 
    \dot q = c e ^ p - d
$,
where $p$ and $q$ are the logarithm population densities of prey and predator, and the parameters $a, b, c, d$ indicate the growth and death rate of the two populations.
In our experiment, we use $a=2/3, b=4/3$, and $c=d=1$. We simulate $N=200$ trajectories up to $T=10^4$ timesteps with $\Delta t=0.002$.

In the following discussion, we will use $x$ to refer to the states of these systems and $z$ for the latent embeddings of the states. For example, $x = (p, q) ^T \in \R ^ 2$ for the Lotka-Volterra system.

\subsection{Symmetry Discovery}
\begin{figure}[h]
    \centering
    \begin{subfigure}{.23\textwidth}
        \includegraphics[width=\textwidth]{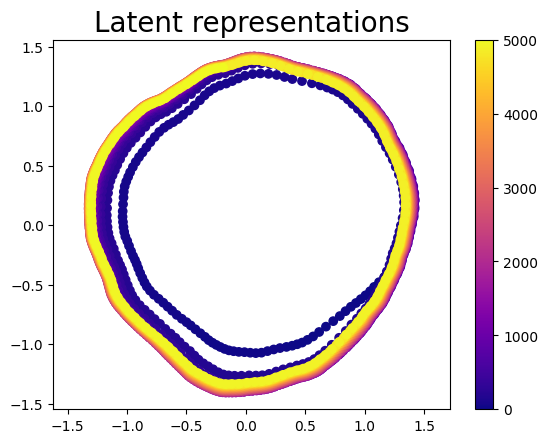}
        \caption{}
        \label{fig:rd-latent-2d}
    \end{subfigure}
    \begin{subfigure}{.184\textwidth}
        \includegraphics[width=\textwidth]{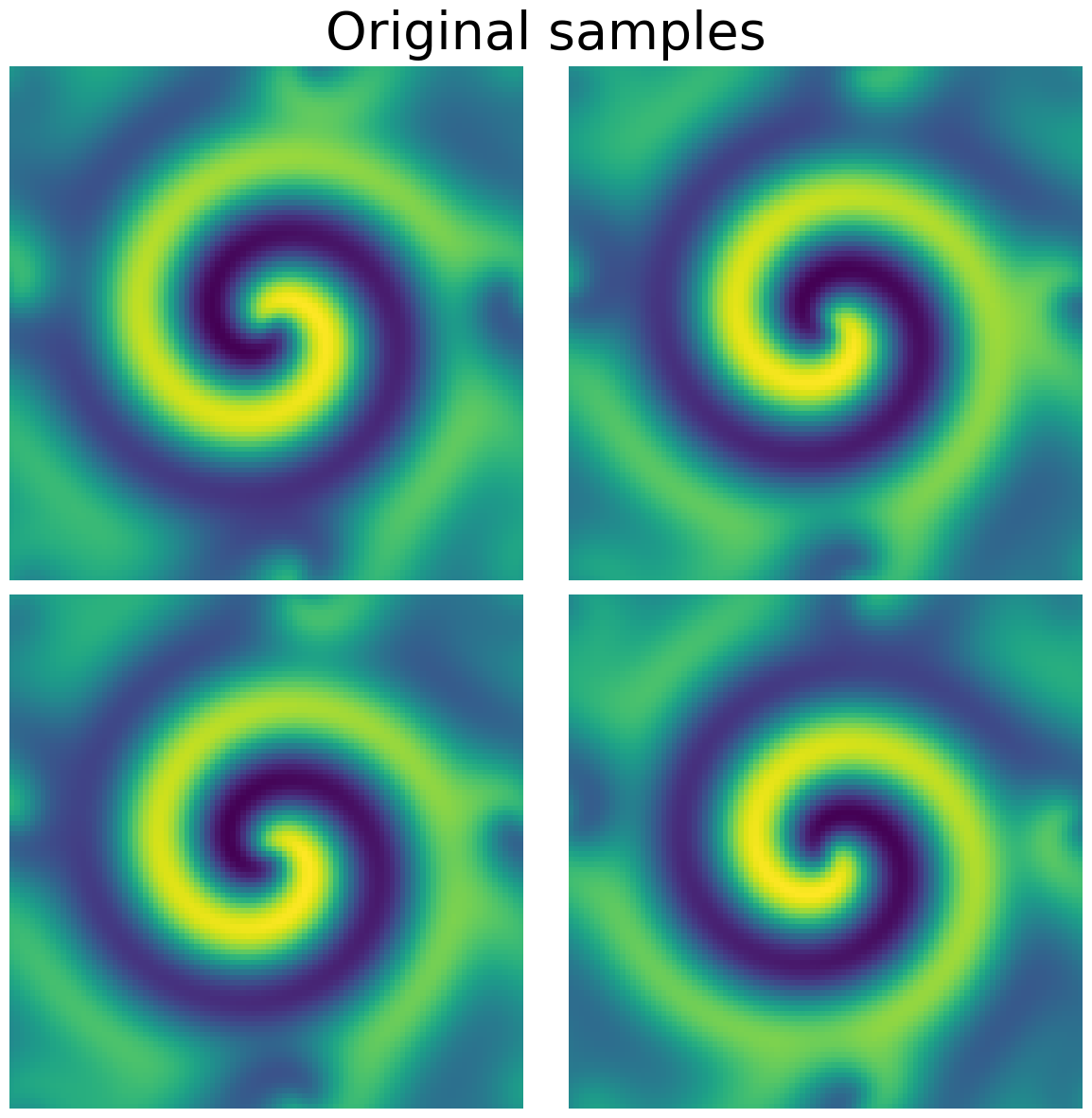}
        \caption{}
        \label{fig:rd-original}
    \end{subfigure}
    \begin{subfigure}{.184\textwidth}
        \includegraphics[width=\textwidth]{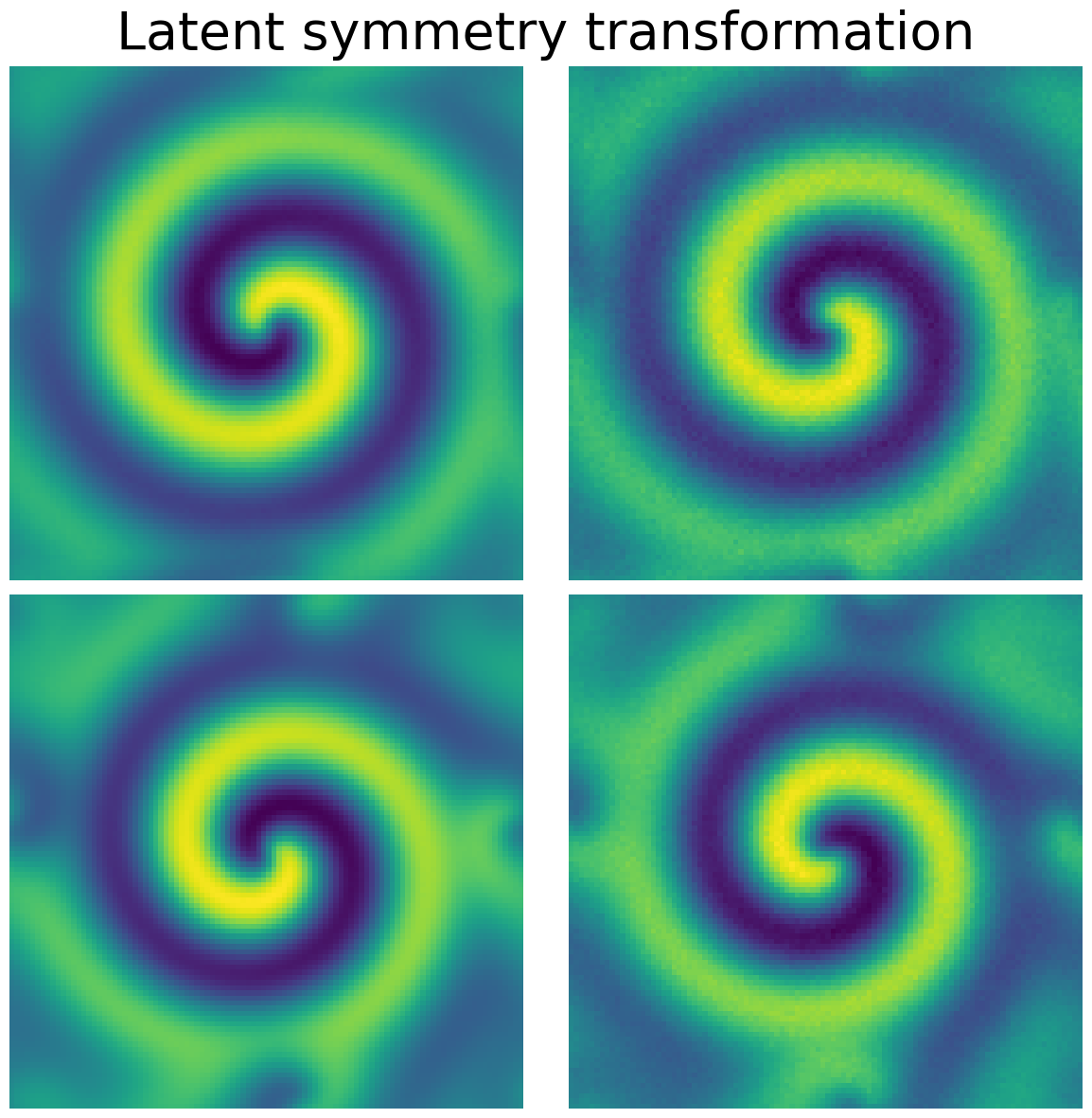}
        \caption{}
        \label{fig:rd-latent-transformed}
    \end{subfigure}
    \hspace{.03\textwidth}
    \begin{subfigure}{.184\textwidth}
        \includegraphics[width=\textwidth]{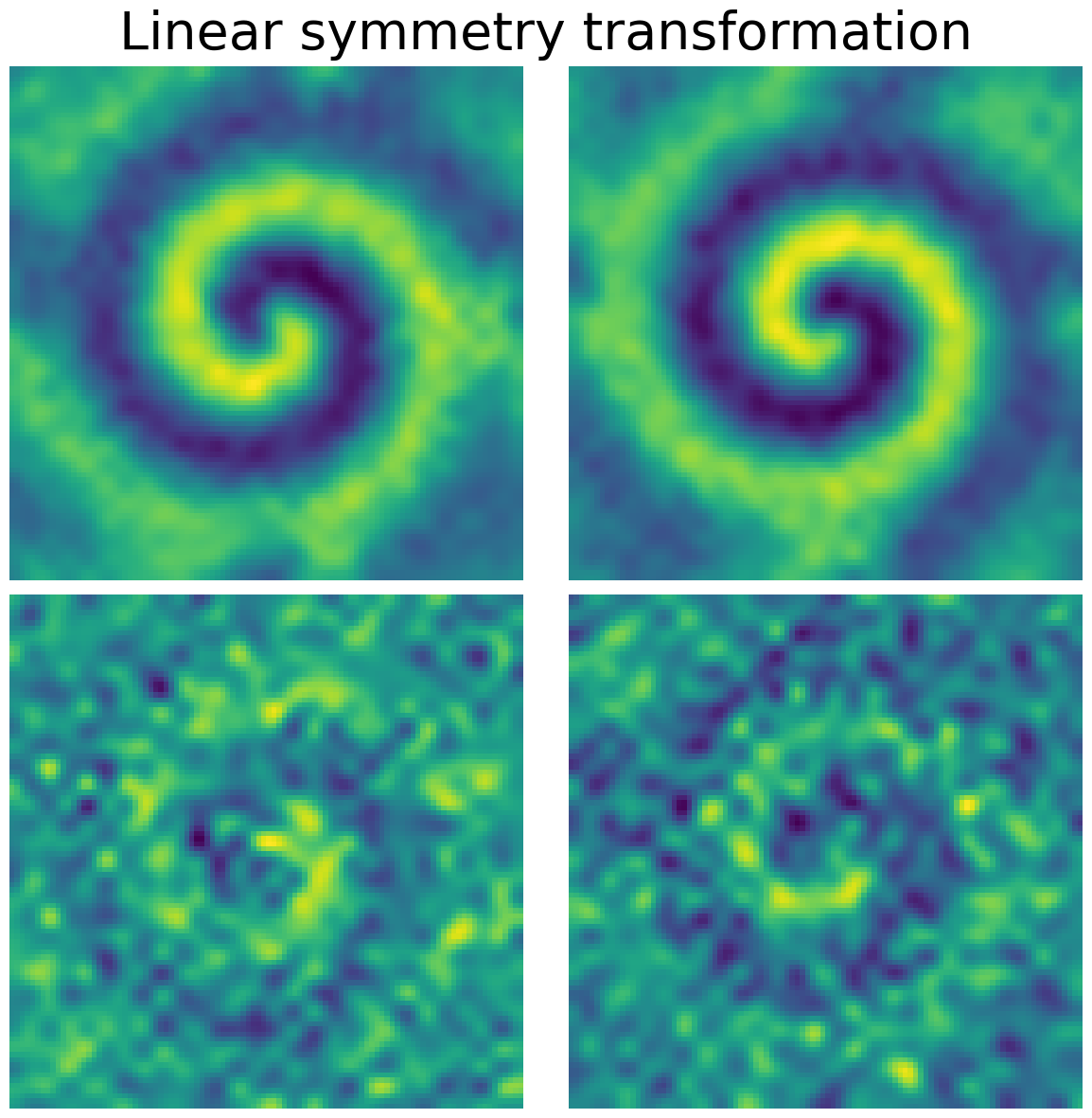}
        \caption{}
        \label{fig:rd-liegan-transformed}
    \end{subfigure}
    \hspace{-0.022\textwidth}
    \caption{Symmetry discovery in reaction-diffusion system with 2D latent space. (a) Latent representations of the system at all timesteps. (b) Randomly selected samples from the dataset. (c) Samples transformed by \ours{} are similar to the original data. (d) Samples transformed by the baseline, linear LieGAN, are significantly different from the original data.}
    \label{fig:rd-2d}
\end{figure}

We train \ours{} to learn the nonlinear mappings between observations and latent representations, along with the linear symmetry in the latent space. We aim to discover the equivariance of latent dynamics, i.e. $z_{t+1}=f(z_t)\Rightarrow gz_{t+1}=f(gz_t)$. Therefore, we take two consecutive timesteps $(x_t, x_{t+1})$ as input, encode them to latent representations with the same encoder weights, and apply the same transformations sampled from the symmetry generator.

For the reaction-diffusion system, we follow the setting in \citet{sindyae} and set the latent dimension $k=2$. Figure \ref{fig:rd-latent-2d} shows how the system evolves in the latent space throughout $T=5000$ timesteps. The Lie algebra basis discovered in the latent space is \(L=\begin{bmatrix}
    0.06 & -3.07 \\ 3.05 & -0.04
\end{bmatrix}\). This suggests an approximate $\mathrm{SO}(2)$ symmetry, which is evident from the visualization.

For the pendulum and the Lotka-Volterra system, we also set the latent dimensions to $2$, which is the same as their input dimensions. \cref{fig:ode-original} shows the trajectories of these two systems in the latent space, with the discovered symmetries
\(L_{\text{pendulum}}=\begin{bmatrix}
    0 & -5.24 \\ 2.16 & 0
\end{bmatrix}\) and \(L_{\text{LV}}=\begin{bmatrix}
    0 & 2.43 \\ -2.74 & 0
\end{bmatrix}\). These indicate rotation symmetries up to a certain scaling in the latent dimensions.


\begin{figure}[ht]
    \centering
    \begin{subfigure}{.11\textwidth}
        \includegraphics[width=\textwidth, viewport=80 50 400 300, clip]{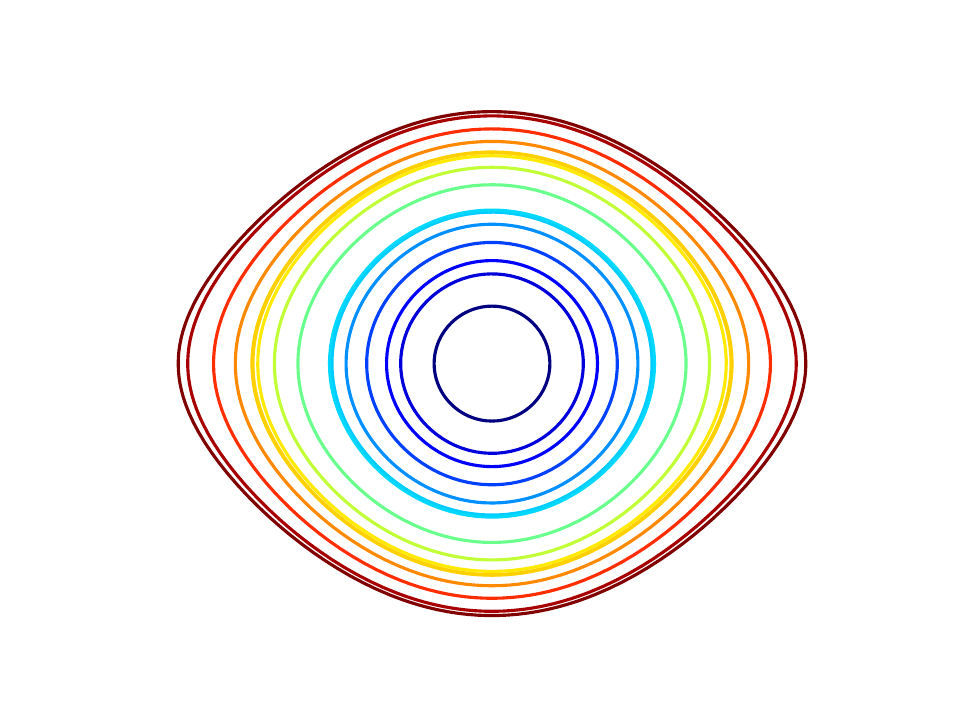}
    \end{subfigure}
    \hfill
    \begin{subfigure}{.11\textwidth}
        \includegraphics[width=\textwidth, viewport=73 50 400 300, clip]{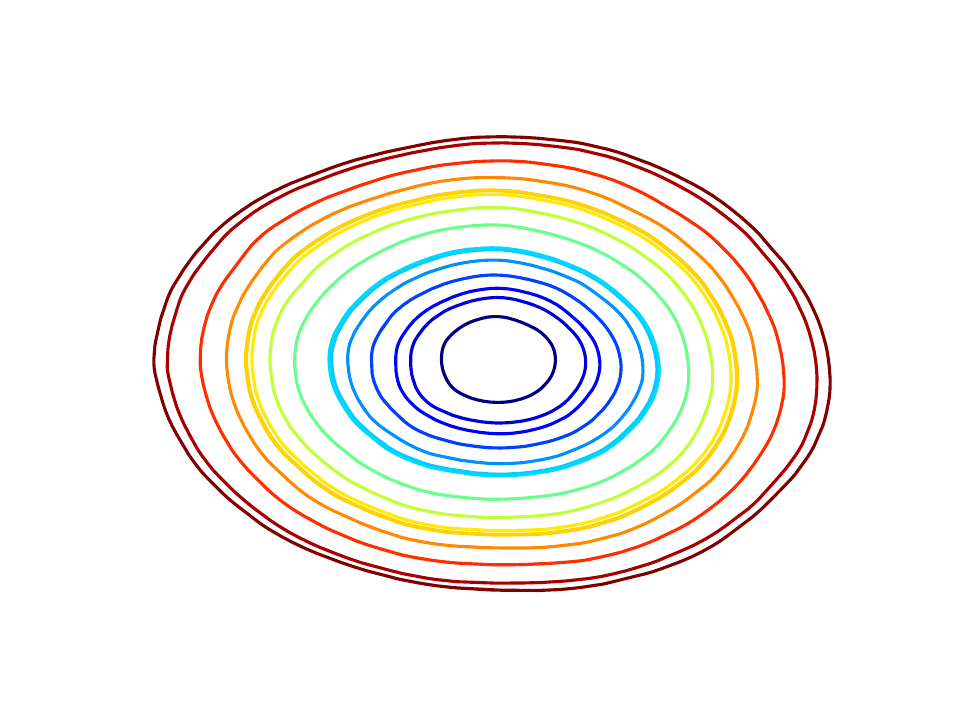}
    \end{subfigure}
    \hfill
    \begin{subfigure}{.11\textwidth}
        \includegraphics[width=\textwidth, viewport=80 50 400 300, clip]{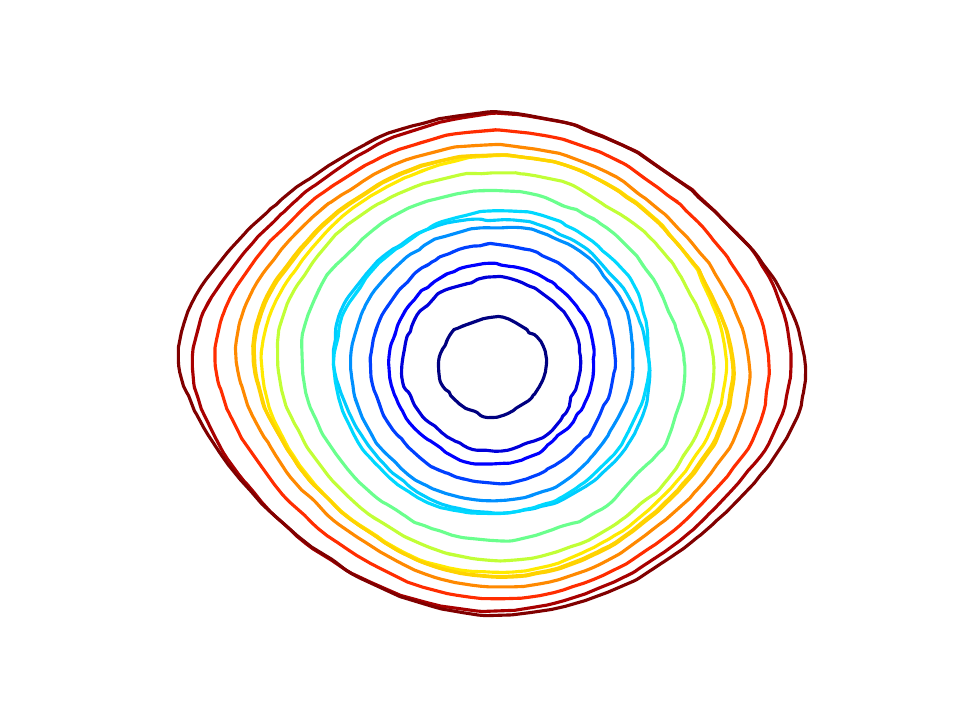}
    \end{subfigure}
    \hfill
    \begin{subfigure}{.11\textwidth}
        \includegraphics[width=\textwidth, viewport=80 50 400 300, clip]{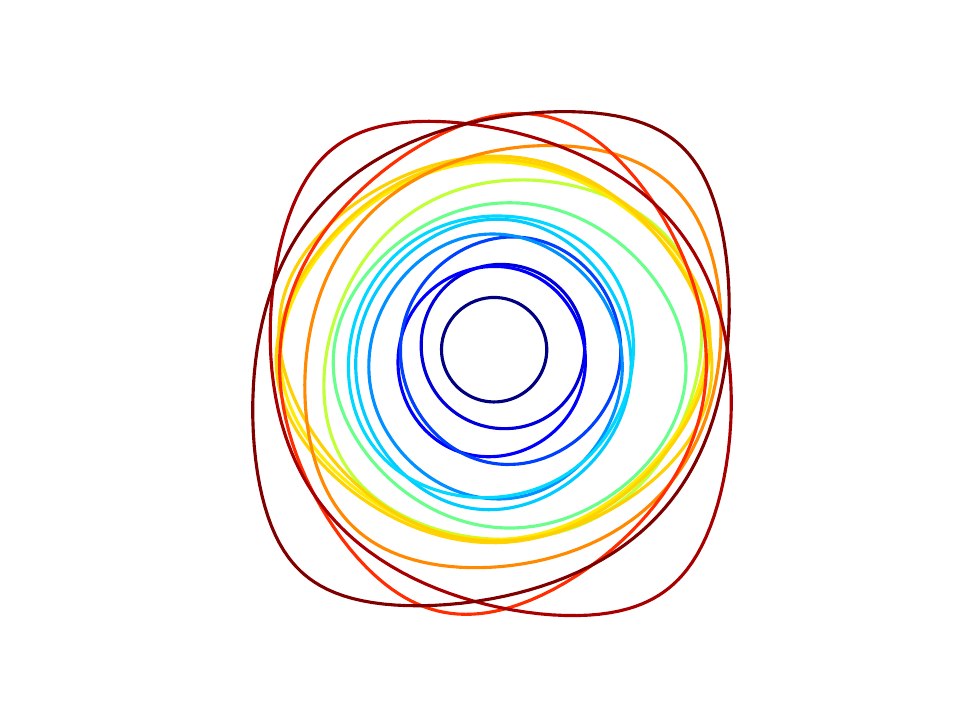}
    \end{subfigure}
    \begin{subfigure}{.11\textwidth}
        \includegraphics[width=\textwidth, viewport=130 50 350 330, clip]{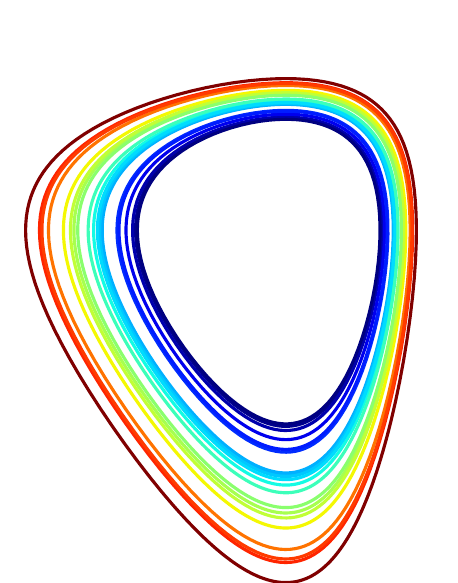}
        \caption{}
        \label{fig:ode-original}
    \end{subfigure}
    \hfill
    \begin{subfigure}{.11\textwidth}
        \includegraphics[width=\textwidth, viewport=110 50 360 330, clip]{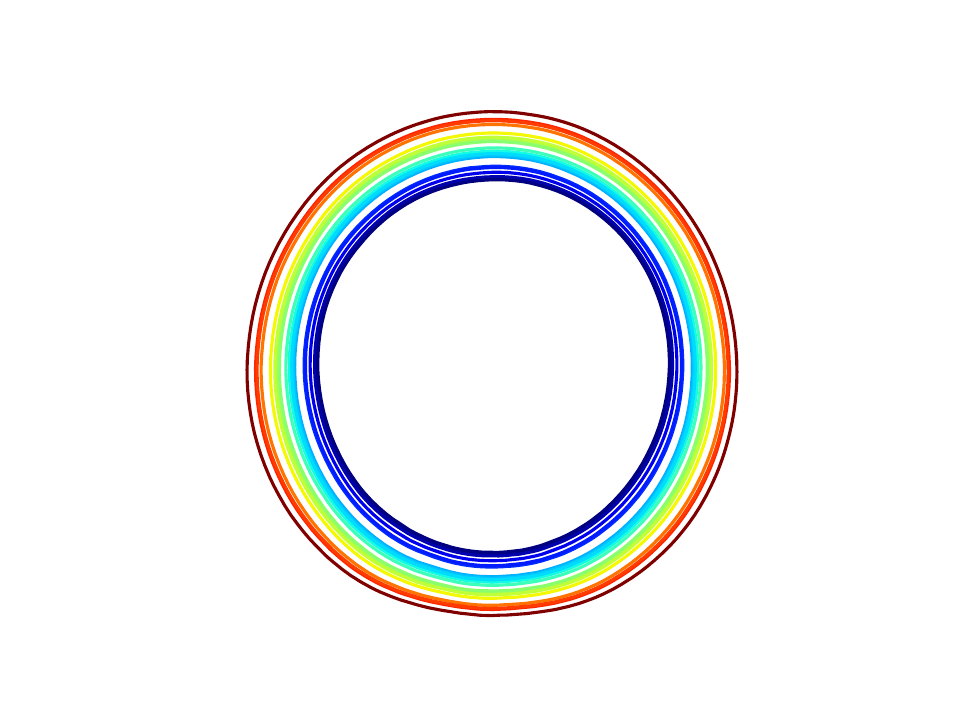}
        \caption{}
        \label{fig:ode-latent}
    \end{subfigure}
    \hfill
    \begin{subfigure}{.11\textwidth}
        \includegraphics[width=\textwidth, viewport=130 50 350 330, clip]{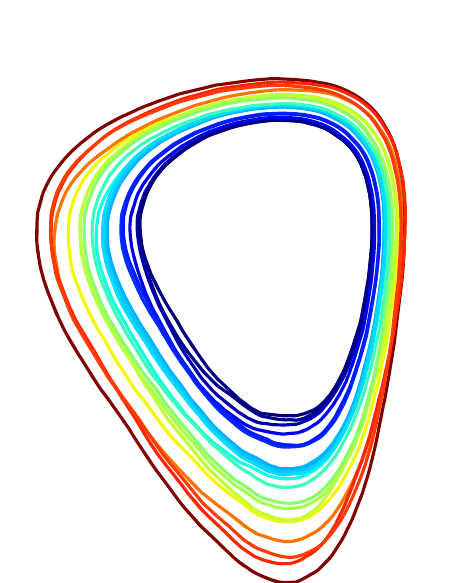}
        \caption{}
        \label{fig:ode-latent-transform}
    \end{subfigure}
    \hfill
    \begin{subfigure}{.11\textwidth}
        \includegraphics[width=\textwidth, viewport=130 50 350 330, clip]{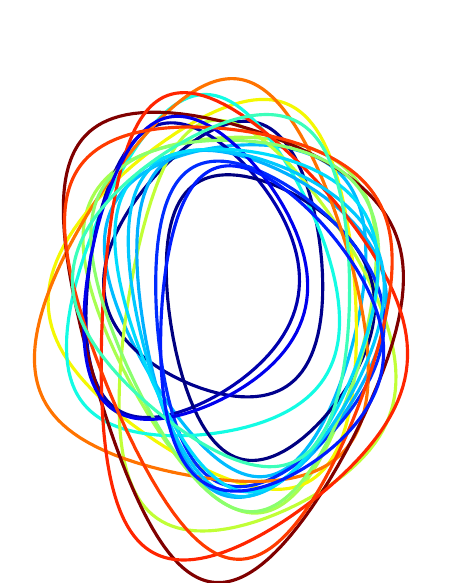}
        \caption{}
        \label{fig:ode-linear-transform}
    \end{subfigure}
    
    \caption{Latent symmetry discovery in nonlinear pendulum (upper) and Lotka-Volterra equations (lower).  (a) Original trajectories of the systems, where the color of each trajectory corresponds to its Hamiltonian. (b) The trajectories mapped to a symmetric latent space. (c) The trajectories transformed by \ours{}. (d) The trajectories transformed by linear LieGAN.}
    \label{fig:sym-dis-nd}
\end{figure}

The accuracy of the discovered symmetry can be verified by visually inspecting the difference between the transformed and the original samples. For the reaction-diffusion system, Figure \ref{fig:rd-latent-transformed} shows some samples with random transformations produced by our method, which are similar to the original data displayed in Figure \ref{fig:rd-original}. We also apply the original LieGAN to this task for comparison, and the transformed samples are shown in Figure \ref{fig:rd-liegan-transformed}. These samples contain obvious artifacts and are noticeably different from the original data, which suggests the necessity of our method when linear symmetry does not exist in observation space.

Similarly, for the pendulum and the Lotka-Volterra system, we use the learned symmetries to transform each entire trajectory, as shown in Figure \ref{fig:ode-latent-transform}. Each trajectory is transformed from the original trajectory of the same color. While each individual data point is taken into a new position, the entire trajectories remain similar before and after transformation, suggesting that the discovered transformations are indeed the symmetries of these systems. In contrast, the linear symmetries learned by LieGAN do not preserve valid trajectories in the observation space, as shown in Figure \ref{fig:ode-linear-transform}.

Besides the visualizations, we evaluate the learned symmetries quantitatively by equivariance error and discriminator logit invariance error \cite{moskalev2023genuine}, defined as
\begin{align}
    EE &= \mathbb E_{x,g} \|f(gx) - gf(x)\| ^2 \label{eq:ee} \\
    DLI &= \mathbb E_{v,g} \frac{1}{2} \| D(v) - D(gv) \| ^2 \label{eq:dli}
\end{align}
where we use $g$ to denote both the group element and its actions, $f$ is the prediction function $x_{t+1}=f(x_t)$, $D$ is the discriminator and $v=(x_t,x_{t+1})$ is the input to \ours{}.
The results are shown in \cref{tab:quantitative}. The learned symmetries from \ours{} achieve lower errors, suggesting that these nonlinear group actions can accurately describe the symmetries of the above systems. A more detailed discussion on how to calculate and interpret these errors is available in Appendix \ref{sec:symm-eval}.

\begin{table}[h]
\small
    \centering
    \begin{tabular}{>{\centering\arraybackslash}p{1.3cm}ccc}
    \hline
        System & Symmetry & Equiv. error & Logit inv. error \\
    \hline
        \multirow{2}{*}{R-D} & \ours{} & \textbf{1.02e-4} & \textbf{2.79e-3} \\
        & LieGAN & - & 3.11e-2 \\
    \hline
        \multirow{2}{*}{L-V} & \ours{} & \textbf{3.00e-2} & \textbf{5.21e-3} \\
        & LieGAN & 8.44e-2 & 4.05e-1 \\
    \hline
        \multirow{2}{*}{Pendulum} & \ours{} & \textbf{4.01e-3} & \textbf{5.33e-3} \\
        & LieGAN & 6.30e-3 & 2.11e-2 \\
    \hline
    \end{tabular}
    \caption{Quantitative metrics for the learned symmetries on test datasets. Equiv. error stands for equivariance error. Logit inv. error stands for logit invariance error. \ours{} can discover nonlinear group actions that more accurately describe the symmetries of the considered dynamical systems. See \cref{sec:symm-eval} for further discussion.}
    \label{tab:quantitative}
\end{table}

\subsection{Effect of Hyperparamemters}
The latent dimension $k$ is a key hyperparameter in our method. 
However, it is not always possible to choose the perfect latent dimension that matches the intrinsic dimension of the system and uncovers symmetry in latent space. To study the robustness of our method under a less ideal configuration, we set the latent dimension $k=3$ for the reaction-diffusion system and repeat the experiment.
As shown in Figure \ref{fig:rd-latent-3d}, the Lie algebra representation is skew-symmetric, indicating rotation symmetry around a particular axis. This can be confirmed as the latent representations roughly dwell on a circular 2D subspace. Although it is not the simplest representation, our method still manages to discover the rotation symmetry as in 2D latent space.

Another hyperparameter that defines the behavior of \ours{} is the dimensionality of the Lie algebra $c$. In the previous experiments, we have set $c=1$. This means that \ours{} can only learn a one-dimensional Lie algebra at a time. Choosing a larger $c$ allows us to discover multiple symmetries simultaneously in the latent space.
As an example, we set the Lie algebra dimensionality to $c=2$ in the Lotka-Volterra system. The result of symmetry discovery is shown in \cref{fig:lv-2d-algebra-main}. The Lie algebra basis $L_1$ and $L_2$ correspond to a scaling symmetry and a rotational symmetry (up to a certain scaling and a tilt angle) in the latent space. In the input space, $L_1$ approximately maps one trajectory to another trajectory with a different Hamiltonian, and $L_2$ takes one point to another within the same trajectory.
This experiment shows that our method can discover symmetry groups of different dimensionalities. More detailed discussion about this experiment can be found in \cref{sec:lv-2d-lie-alg}.

\begin{figure}[ht]
    \centering
    \begin{subfigure}{.3065\textwidth}
        \includegraphics[width=\textwidth]{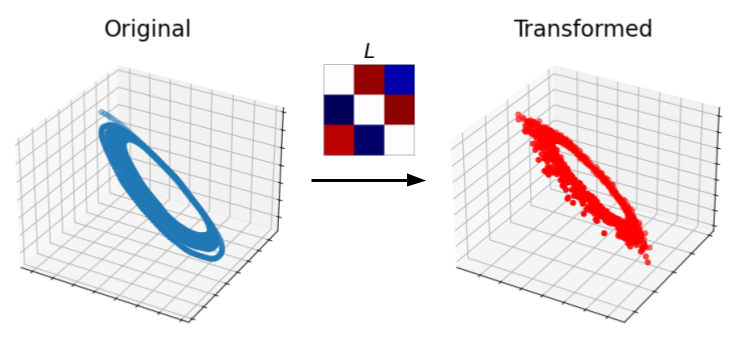}
        \caption{}
        \label{fig:rd-latent-3d}
    \end{subfigure}
    \begin{subfigure}{.13\textwidth}
        \includegraphics[width=\textwidth]{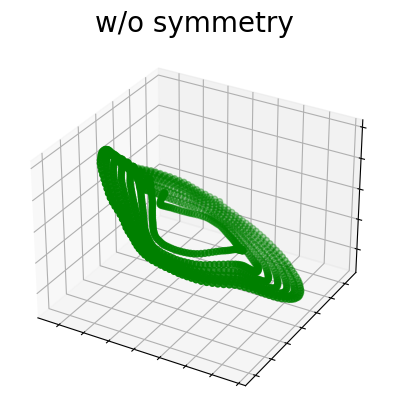}
        \caption{}
        \label{fig:rd-latent-3d-2}
    \end{subfigure}
    \begin{subfigure}{.17\textwidth}
        \includegraphics[width=\textwidth]{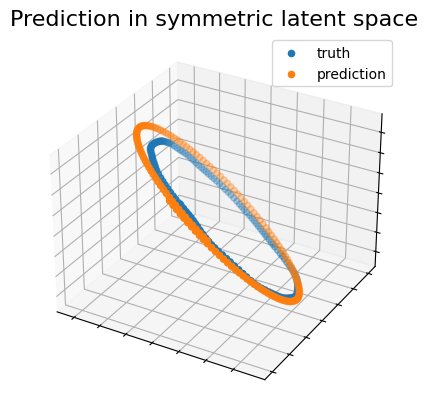}
        \caption{}
        \label{fig:rd-ltp-liegan}
    \end{subfigure}
    \begin{subfigure}{.187\textwidth}
        \includegraphics[width=\textwidth]{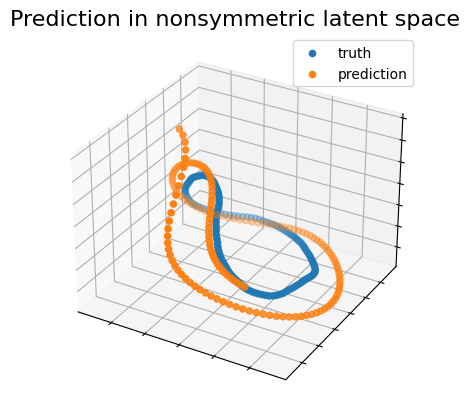}
        \caption{}
        \label{fig:rd-ltp-nosym}
    \end{subfigure}
    \caption{Modeling reaction-diffusion system in 3D latent space. (a) Latent representations before and after our discovered symmetry transformations. (b) The discovered latent space with SINDy but without \ours{}. (c-d) Prediction in both latent spaces.}
    \label{fig:rd-3d}
\end{figure}

\begin{figure}[h]
    \centering
    \includegraphics[width=.48\textwidth, viewport=480 0 1250 270, clip]{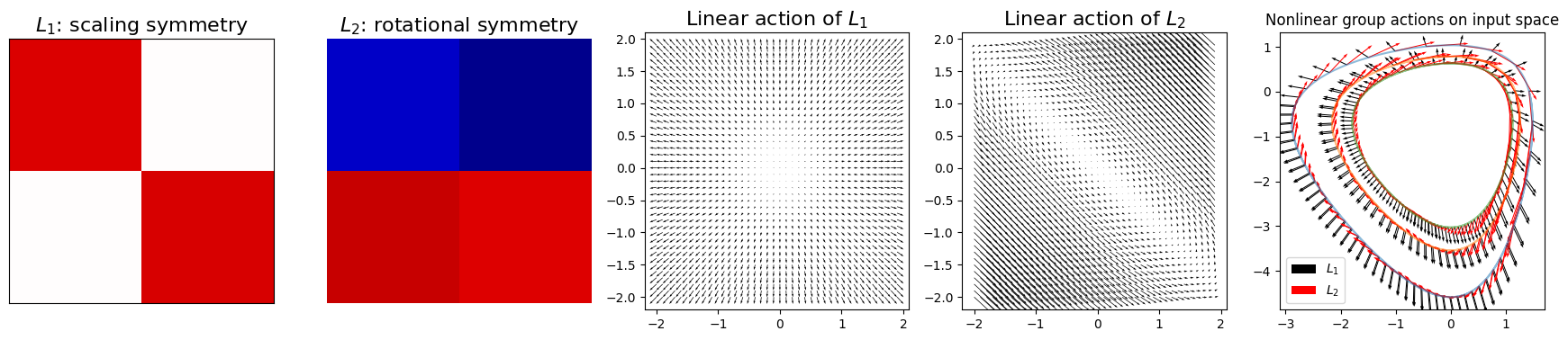}
    \caption{The actions of the discovered 2D Lie algebra on the latent space and the input space.}
    \label{fig:lv-2d-algebra-main}
\end{figure}

\subsection{Using Latent Symmetry for Equation Discovery}\label{sec:eq-dis}

\begin{table}[h]
\scriptsize
    \centering
    \begin{tabular}{ccc}
    \hline
        Model & \ours{} + SINDy & SINDy AE \\
    \hline
        2D & $\begin{aligned}
            \dot z_1=&\quad 0.91z_2\\
            \dot z_2=&-0.91z_1
        \end{aligned}$ & $\begin{aligned}
            \dot z_1=&-0.85z_2\\
            \dot z_2=&\quad0.97z_1
        \end{aligned}$ \\
    \hline 
        3D & $\begin{aligned}
            \dot z_1=&\quad 0.58z_2-0.40z_3\\
            \dot z_2=&-0.56z_1+0.54z_3\\
            \dot z_3=&\quad0.45z_1-0.57z_2
        \end{aligned}$ & $\begin{aligned}
            \dot z_1=&0.65z_2-0.16z_3+\Theta(\mathbf z^2)\\
            \dot z_2=&0.18z_2-0.57z_1+\Theta(\mathbf z^2)\\
            \dot z_3=&0.45z_1-0.57z_2+\Theta(\mathbf z^2)
        \end{aligned}$ \\
    \hline
    \end{tabular}
    
    \caption{Equation discovery on 2D/3D latent spaces for R-D system. Complete results are available in Appendix \ref{sec:supp-rd}.}
    \label{tab:eq-rd}
\end{table}

We demonstrate the benefit of learning latent symmetry by using the latent space to discover governing equations. This is a commonly considered problem in these dynamical systems. We use SINDy \citep{sindy, sindyae} as the equation discovery algorithm, with up to second-order polynomials as candidate functions. The comparison is made between applying SINDy on the latent space learned by our method (\ours{} + SINDy) and using the SINDy autoencoder to learn its own latent space (SINDy AE). The results for the reaction-diffusion system are shown in Table \ref{tab:eq-rd}. The discovered equations from both methods have similar forms in the 2D latent space. In the 3D latent space, the governing equation learned in the \ours{} latent space remains linear. On the other hand, applying the SINDy autoencoder alone results in a nonsymmetric latent space (Figure \ref{fig:rd-latent-3d-2}) and a highly complicated governing equation with second-order terms (Table \ref{tab:eq-rd}).



\paragraph{Long-term forecasting.}

To further verify the accuracy of the discovered equations, we use these equations to simulate the dynamics in the latent space. Concretely, given the initial input frame $x_0$, we obtain its latent representation $\hat z_0=\phi(x_0)$ and predict the future $T$ timesteps by iteratively computing $\hat z_{t+1}=\hat z_t+F(\hat z_t)\cdot\Delta t$, where $\dot z=F(z)$ denotes the discovered governing equation. Then, we map the representations back to the input space by $\hat x_t=\psi(\hat z_t)$.
Figure \ref{fig:rd-ltp-liegan} and \ref{fig:rd-ltp-nosym} show the simulated latent trajectories from the equations discovered in 3D latent space with and without \ours{}. The trajectory remains close to ground truth in the symmetric latent space but diverges quickly for the equation from SINDy AE.
Quantitatively, we also show that the discovered equation in the \ours{} latent space has a lower prediction error. We present the full results in \cref{fig:rd-ltp-mse}, \cref{sec:supp-rd}.

We also conduct the same experiments of equation discovery and long-term forecasting for the nonlinear pendulum and the Lotka-Volterra system. The results are available in Appendix \ref{sec:supp-ode}. While they have simple closed-form governing equations in the observation space, we find that discovering a latent space with learnable symmetry can still be beneficial. The symmetry enforces linear governing equations and reduces error accumulation in long-term forecasting. 
\vspace{-2mm}
\section{Learning Equivariant Representation}
\begin{figure}[h]
\vspace{-1mm}
    \centering
    \begin{subfigure}{.18\textwidth}
        \includegraphics[width=\textwidth, viewport=0 0 280 290, clip]{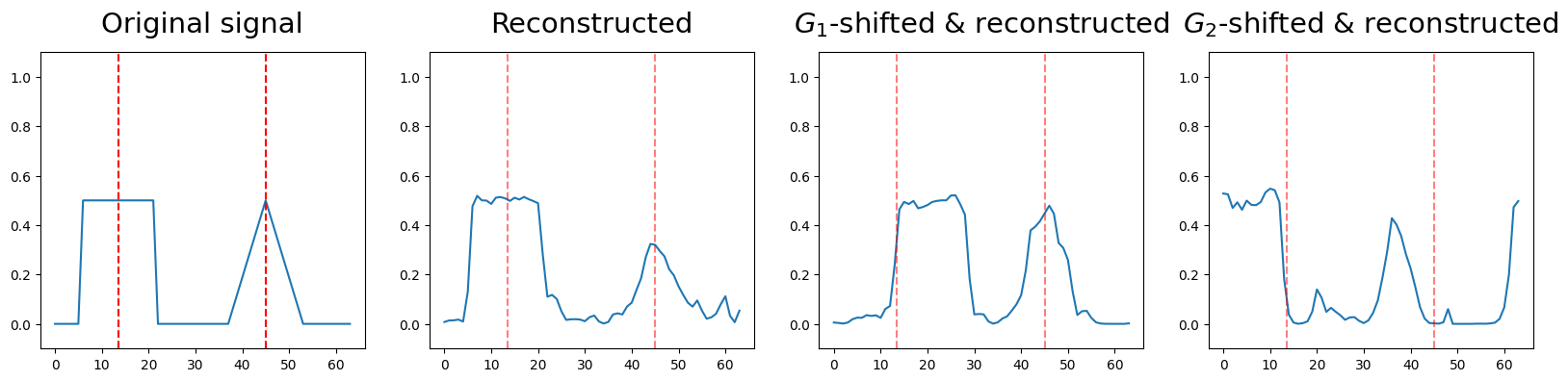}
        \label{fig:db-recon1}
    \end{subfigure}
    \begin{subfigure}{.2\textwidth}
        \includegraphics[width=\textwidth, viewport=0 -100 512 270, clip]{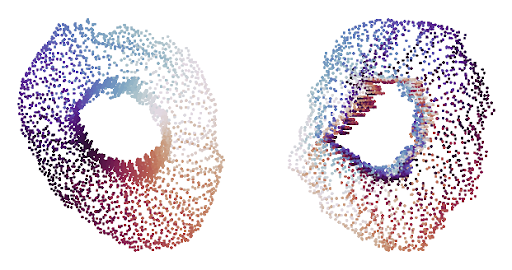}
        \label{fig:db-rep}
    \end{subfigure}
    \vspace{-2mm}
    \caption{Learning equivariant representation of the double-bump world. Left: An original signal $x \in \mathbb R^{64}$. Right: Learned latent space as the direct sum of two 2D subspaces. The color of a data point corresponds to the location of the rectangular bump in the first component and the triangular bump in the second. See \cref{fig:db-full} for full results.}
    \label{fig:db}
\end{figure}

When linear group representation is given, we can use \ours{} for learning the corresponding group equivariant representation. Unlike previous works \citep{garrido2023sie, shakerinava2022structuring}, we learn the representation without any knowledge of the group element associated with each data point. We consider the example of a double-bump world in \citet{shakerinava2022structuring}. It consists of a rectangular and a triangular bump signal, both cyclically shifted in a window. The signal is visualized in \cref{fig:db}. The cyclic translation of each bump forms an $\mathrm{SO(2)}$ group. As each bump is shifted independently, the symmetry group for the composed signal is $\mathrm{SO(2)} \times \mathrm{SO(2)}$. Therefore, we use a 4-dimensional latent space $Z=\mathbb R^2 \oplus \mathbb R^2$ and fix the Lie algebra basis to $L = L_1 \oplus L_2$, \(L_1 = L_2 = \begin{bmatrix}
    0 & 1 \\ -1 & 0
\end{bmatrix}\).

Figure \ref{fig:db} (right) shows the latent space learned by \ours{}. We observe that rotation in the first component shifts the rectangular bump, while rotation in the second component simultaneously shifts both bumps. We provide a more detailed discussion in \cref{sec:more-exp-db} with additional visualizations of transformed and reconstructed samples. This is an example that how our method can learn equivariant representations when we do not know the group transformation of each data point. We also include another experiment on $\mathrm{SO}(3)$ equivariant representation for a 3D object in Appendix \ref{sec:supp-rot3d}.  
\section{Discussion}

We propose \ours{}, a novel generative modeling framework for discovering nonlinear symmetries. \ours{} decomposes the group action as a linear representation on a latent space and a pair of nonlinear mappings between the latent space and the observation space. By jointly optimizing the group representation and the nonlinear mappings, it discovers both the symmetry group and its nonlinear group action on the data. We also show that it can be applied to downstream tasks such as equation discovery, leading to simpler equations and better long-term prediction accuracy.

A limitation of our work lies in \cref{thm:uanga}, which only guarantees that our method can model actions of compact groups, among other restrictions. However, the results in \cref{sec:lv-2d-lie-alg} and \ref{sec:top-tagging} suggest that noncompact symmetry groups can also be learned. Thus, an important direction for future work is to develop the theory for modeling more general group actions within our proposed framework. We also plan to investigate the connection between symmetry and other physical properties such as conservation laws. Given the prevalence of symmetries in the natural world, our long-term goal is to develop a general framework for automatically discovering symmetries and other types of governing laws from data and accelerate scientific discovery.


\section*{Impact Statement}
This paper presents work whose goal is to advance the field of Machine Learning. There are many potential societal consequences of our work, none of which we feel must be specifically highlighted here.

\section*{Acknowledgement}
This work was supported in part by  U. S. Army Research Office
under Army-ECASE award W911NF-07-R-0003-03, the U.S. Department Of Energy, Office of Science, IARPA HAYSTAC Program, and NSF Grants \#2205093, \#2146343, \#2134274, \#2107256,  \#2134178, CDC-RFA-FT-23-0069 and DARPA AIE FoundSci.



\bibliography{ref}
\bibliographystyle{icml2024}

\newpage
\appendix
\onecolumn

\section{Supplementary Experiment Results}
\subsection{High-Dimensional Reaction-Diffusion System}\label{sec:supp-rd}

\begin{table}[h]
    \small
    \centering
    \begin{tabular}{l|l}
    \hline
        Model & Discovered equation \\
    \hline
        \ours{} + SINDy & $\begin{aligned}
            \dot z_1=&\quad 0.43z_2-0.53z_3\\
            \dot z_2=&-0.51z_1+0.66z_3\\
            \dot z_3=&\quad0.47z_1-0.52z_2
        \end{aligned}$\\
    \hline 
        \ours{} + SINDy + PCA & 
        $\begin{aligned}
            \dot u_1=&-0.98u_2\\
            \dot u_2=&\quad 0.84u_1\\
            \dot u_3=&\quad0
        \end{aligned}$\\
    \hline 
        SINDy AE & 
        $\begin{aligned}
            \dot z_1=&\quad0.65z_2-0.16z_3+0.20z_1^2+0.11z_1z_2+0.29z_1z_3\\
            &-0.41z_2z_3-0.16z_3^2\\
            \dot z_2=&-0.57z_1+0.18z_2-0.24z_1z_2+0.46z_1z_3-0.18z_2^2\\
            &-0.26z_2z_3+0.29z_3^2\\
            \dot z_3=&\quad0.45z_1-0.57z_2-0.27z_1^2+0.18z_2^2-0.19z_2z_3
        \end{aligned}$\\
    \hline
        SINDy AE + PCA & 
        $\begin{aligned}
            \dot u_1=&\quad0.95u_2-0.06u_3+0.09u_1u_2+0.16u_1u_3-0.59u_2u_3-0.12u_3^2\\
            \dot u_2=&-0.58u_1+0.29u_3-0.57u_1u_3-0.23u_2u_3-0.10u_3^2\\
            \dot u_3=&-0.06u_1^2+0.51u_1u_2+0.08u_2^2+0.35u_2u_3
        \end{aligned}$\\
    \hline
    \end{tabular}
    \caption{Complete equation discovery results on 3D latent space for reaction-diffusion system.}
    \label{tab:eq-rd-3d}
\end{table}

\begin{wrapfigure}{r}{5.5cm}
    \vspace{-3mm}
    \centering
        \includegraphics[width=\linewidth]{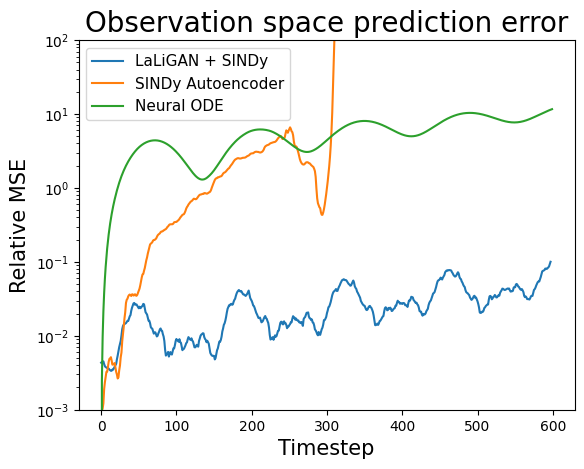}
    \caption{Relative MSE losses of long-term simulations of R-D system.}
    \label{fig:rd-ltp-mse}
    \vspace{-2mm}
\end{wrapfigure}

Table \ref{tab:eq-rd-3d} expands the folded results in Table \ref{tab:eq-rd} of discovering governing equations in 3D latent space for reaction-diffusion system. Applying the SINDy autoencoder alone results in a highly complicated governing equation with several second-order terms.
Also, we find that the equations learned on the symmetric latent space can even be further simplified with a linear transformation introduced by principle component analysis (PCA). The $u$'s in the equations denote the principle components, sorted by the variance each component explains. In comparison, the equations from the SINDy autoencoder alone do not admit a simpler form under the linear PCA transformation.

We also evaluate the forecasting accuracy quantitatively by the relative MSE between the prediction and ground truth in the observation space, as is shown in Figure \ref{fig:rd-ltp-mse}. Besides the symbolic models in Table \ref{tab:eq-rd-3d}, we also include Neural ODE \citep{node} as a baseline. Similar to the symbolic equation discovery, it can also predict the dynamics at arbitrary timesteps with an ODE parametrized by a neural network. Figure \ref{fig:rd-ltp-mse} shows that the discovered equation learned with latent space symmetry outperforms both the equation from vanilla SINDy AE and the Neural ODE model in this task of long-term dynamics forecasting.

\newpage
\subsection{Nonlinear Ordinary Differential Equations}\label{sec:supp-ode}
\begin{table}[h]
\small
    \centering
    \begin{tabular}{l|l}
    \hline
        Method & Discovered equation \\
    \hline 
        \ours{} + SINDy & $\begin{aligned}
            \dot z_1=& -0.94z_2\\
            \dot z_2=&\quad0.38z_1
        \end{aligned}$ \\
    \hline
        SINDy & $\begin{aligned}
            \dot z_1=& \quad 0.99z_2\\
            \dot z_2=&-0.98 \sin (z_1)
        \end{aligned}$\\
    \hline 
        SINDy AE & $\begin{aligned}
            \dot z_1=& -0.46\sin(z_2)\\
            \dot z_2=&\quad 0.51 z_1 + 0.42 \sin(z_1)
        \end{aligned}$\\
    \hline
    \end{tabular}
    \caption{Equation discovery for pendulum.}
    \label{tab:eq-pd}

\end{table}
\begin{table}[h]
\small
    \centering
    \begin{tabular}{l|l}
    \hline
        Method & Discovered equation \\
    \hline 
        \ours{} + SINDy & $\begin{aligned}
            \dot z_1=& -0.65 - 0.56z_2\\
            \dot z_2=& -0.14 + 0.67z_1
        \end{aligned}$ \\
    \hline
        SINDy & $\begin{aligned}
            \dot z_1=& \quad0.64 - 1.28 e^{z_2}\\
            \dot z_2=& -0.91 + 1.05 e^{z_1}
        \end{aligned}$ \\
    \hline 
        SINDy AE & $\begin{aligned}
            \dot z_1=& \quad12.47 - 5.27 z_1 + 40.00 z_2 \\ +& 0.19 z_1z_2 - 0.64 z_1^2 - 0.93 e^{z_1}\\
            \dot z_2=& -6.91 - 0.65 z_1^2
        \end{aligned}$\\
    \hline
    \end{tabular}
    \caption{Equation discovery for L-V system.}
    \label{tab:eq-lv}
\end{table}

Table \ref{tab:eq-pd} and \ref{tab:eq-lv} show the equation discovery results for the nonlinear pendulum and the Lotka-Volterra system. For each dataset, we apply three methods for equation discovery: 1) learning a symmetric latent space with \ours{}, and training SINDy with the fixed latent space; 2) training SINDy in the original observation space; 3) training SINDy autoencoder to learn a latent space without symmetry and discover the equation. Unlike the experiment in the high-dimensional reaction-diffusion system, we include SINDy without autoencoder because the observation space is low-dimensional in each of these systems and there indeed exists a closed-form governing equation.

It can be observed that applying \ours{} with SINDy still leads to simple linear equations. For the Lotka-Volterra system, the equation also consists of constant terms because the latent space is not centered at the origin. On the other hand, SINDy almost recovers the ground truth equations in both tasks, with no additional or missing terms but only small numerical errors for the coefficients.

\begin{figure}[ht]
    \centering
    \begin{subfigure}{.35\textwidth}
        \includegraphics[width=\textwidth]{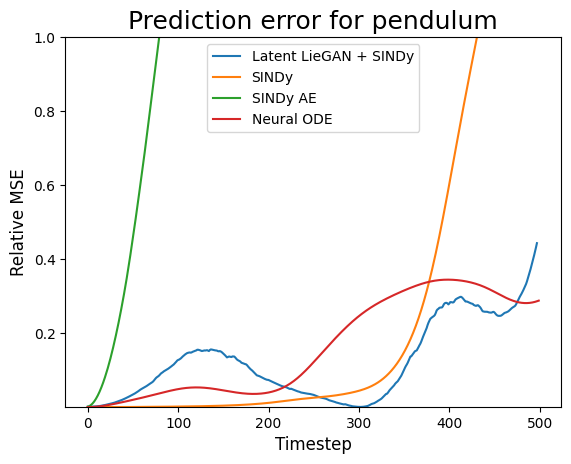}
        \caption{Pendulum}
        \label{fig:pd-ltp-mse}
    \end{subfigure}
    \begin{subfigure}{.35\textwidth}
        \includegraphics[width=\textwidth]{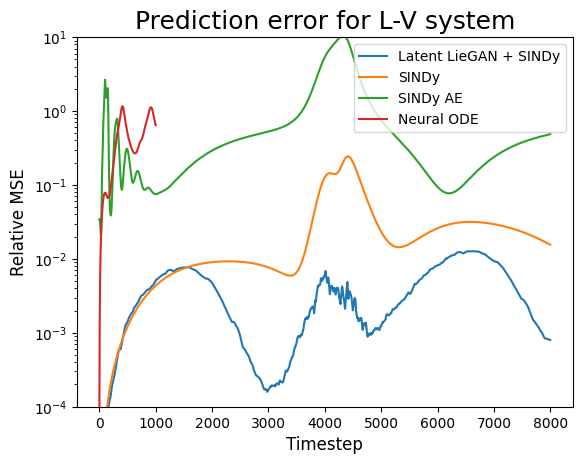}
        \caption{Lotka-Volterra system}
        \label{fig:lv-ltp-mse}
    \end{subfigure}
    \caption{Long-term prediction for nonlinear ODEs.}
    \label{fig:ode-ltp}
\end{figure}

However, our method can still achieve better long-term forecasting accuracy with the discovered equation. Similar to section \ref{sec:eq-dis}, given an initial input frame $x_0$, we obtain its latent representation $\hat z_0=\phi(x_0)$ and predict the future $T$ timesteps by iteratively calculating $\hat z_{t+1}=\hat z_t+F(\hat z_t)\cdot\Delta t$, where $\dot z=F(z)$ denotes the discovered governing equation. Then, we map the representations back to the input space by $\hat x_t=\psi(\hat z_t)$ to get the prediction in observation space. Figure \ref{fig:ode-ltp} shows the relative mean square error at different timesteps. The curve for Neural ODE in the Lotka-Volterra system is incomplete because the prediction goes to NaN after about 1000 steps. Generally, our method leads to the slowest error accumulation. By contrast, while SINDy managed to recover the almost correct equation, the small numerical error can still lead to a very large error after a certain time period.

\subsection{Multi-Dimensional Lie Algebra}\label{sec:lv-2d-lie-alg}

\begin{figure}[h]
    \centering
    \includegraphics[width=.95\textwidth]{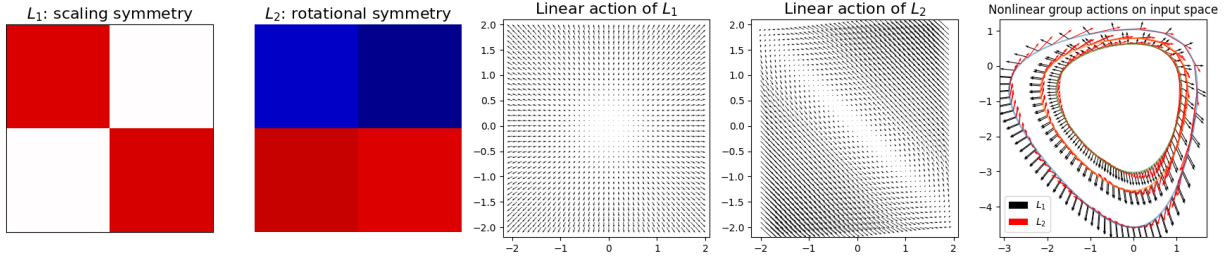}
    \caption{Discovered 2D Lie algebra and its actions on the latent space and the input space.}
    \label{fig:lv-2d-algebra}
\end{figure}

\begin{figure}[h]
    \centering
    \begin{subfigure}{.3\textwidth}
        \includegraphics[width=\textwidth]{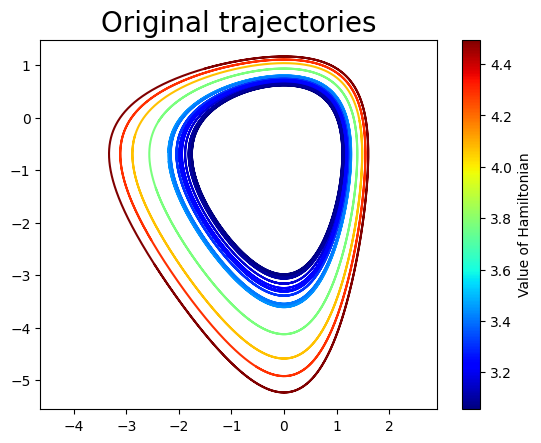}
    \end{subfigure}
    \begin{subfigure}{.3\textwidth}
        \includegraphics[width=\textwidth]{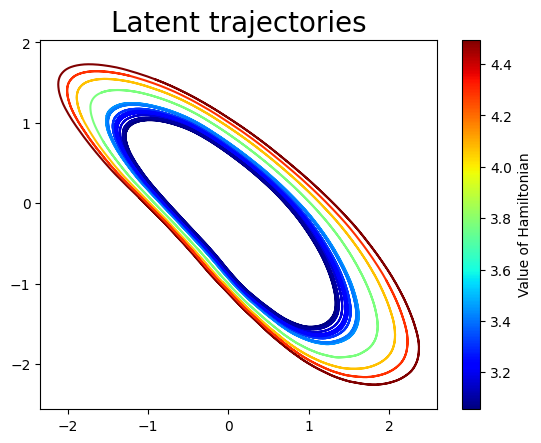}
    \end{subfigure}
    \begin{subfigure}{.3\textwidth}
        \includegraphics[width=\textwidth]{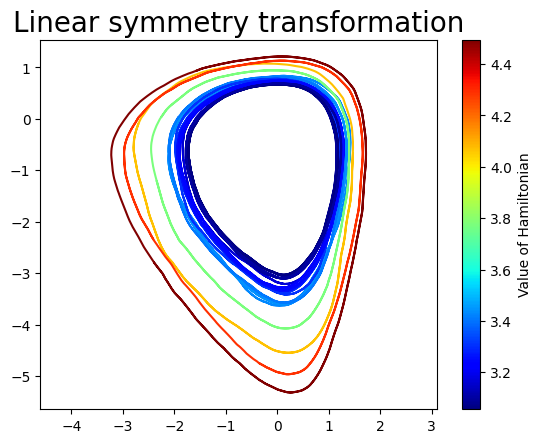}
    \end{subfigure}
    \caption{Left: Original trajectories of the Lotka-Volterra system. Middle: The trajectories mapped to the discovered latent space. Right: Original trajectories transformed by \ours{} with 2D Lie algebra.}
    \label{fig:lv-2d-traj}
\end{figure}

Our method learns a Lie algebra basis $\{ L_i \in \mathbb R^{k \times k} \}_{i=1}^c$. The dimensionality of the Lie algebra, $c$, is a hyperparameter. In the previous experiments, we have set $c=1$, meaning that \ours{} is only learning a one-dimensional Lie algebra at a time. Choosing a different $c$ allows us to discover multiple symmetries simultaneously in the latent space. We demonstrate this with the Lotka-Volterra equation experiment.

In this experiment, we set the latent dimension to 2 and increase the Lie algebra dimension from 1 to 2. Figure \ref{fig:lv-2d-algebra} shows the discovered Lie algebra basis, $L_1$ and $L_2$. One can verify that $\{ L_1, L_2 \}$ forms a valid Lie algebra basis that is closed under the Lie bracket. The actions of $L_1$ and $L_2$ in the latent space can be visualized by the vector fields $(L_1z)^i\partial_i$ and $(L_2z)^i\partial_i$. It can be observed that $L_1$ corresponds to a scaling symmetry and $L_2$ corresponds to a rotational symmetry (up to a certain scaling and a tilt angle). The actions of of $L_1$ and $L_2$ in the input space can be visualized by the vector fields $(\frac{\partial\psi}{\partial z})^{ij}(L_1z)_j\partial_i$ and $(\frac{\partial\psi}{\partial z})^{ij}(L_2z)_j\partial_i$. The rightmost plot shows these vector fields evaluated on the original trajectories.

It is easier to interpret the meaning of these discovered symmetries by looking at the latent trajectories in Figure \ref{fig:lv-2d-traj}. The scaling symmetry $L_1$ changes the Hamiltonian of the system and indicates that the governing equation of the system $z_{t+1}=f(z_t)$ does not change with the Hamiltonian. The rotational symmetry $L_2$ is similar to the original experiment with only one-dimensional Lie algebra, which approximately takes one point to another within the same trajectory. Its representation differs from the previous one-dimensional experiment because the latent embeddings of the trajectories have also changed. Still, it can be interpreted as a time translation symmetry of the system.

\subsection{Lorentz Symmetry in Top Tagging}\label{sec:top-tagging}

We consider the Top Tagging dataset \citep{kasieczka2019machine}, which is also studied in \citet{liegan}. The task is a binary classification between top quark jets and the background signals. There are 2M observations in total, each consisting of the four-momentum of up to 200 particle jets.

This classification task is invariant to the restricted Lorentz group $\mathrm{SO}^+(1,3)$. It is a 6-dimensional Lie group, including the spatial rotations around three axes and the boosts along three spatial directions.

The original dataset has a linear symmetry in the input space. To test whether \ours{} can learn nonlinear group actions, we transform the original inputs to a high-dimensional space and use it as the new input space for \ours{}. Concretely, we choose 4 spatial modes $\mathbf u_i \in \mathbb R^{128}$ given by Legendre polynomials and define $\mathbf u = \sum_{i=1}^4x_i\mathbf u_i$ where $\mathbf x = (x_1, x_2, x_3, x_4)$ is the 4-momentum from the original dataset.

In our experiment, we set the latent dimension to 4 and the Lie algebra dimension to 6. Figure \ref{fig:top-tagging} shows the discovered Lie algebra and its structure constants. Its representation does not match Figure 5 from \citet{liegan}, because the latent representations obtained by the encoder are different from the original 4-momentum inputs. However, we can compute the structure constants of this Lie algebra, which reveal its similar algebraic structure to the ground truth Lorentz algebra $\mathfrak{so}(1,3)$.

\begin{figure}[h]
    \centering
    \begin{subfigure}{.7\textwidth}
        \includegraphics[width=\textwidth]{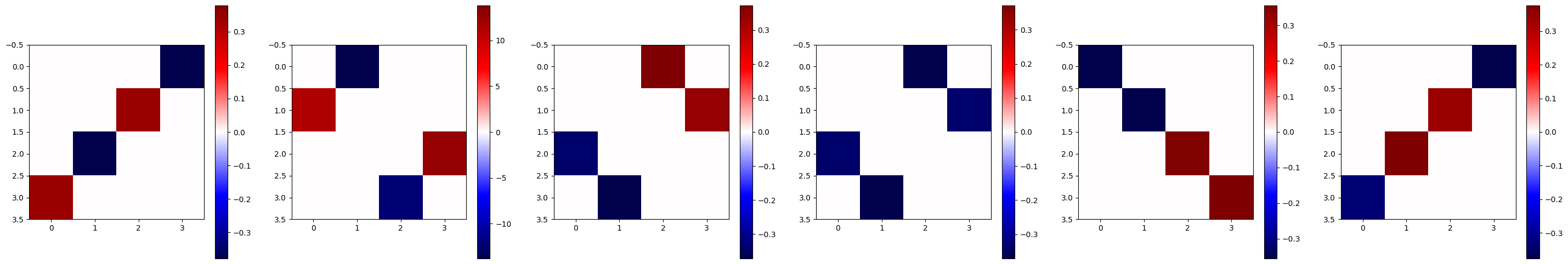}
    \end{subfigure}
    \begin{subfigure}{.13\textwidth}
        \includegraphics[width=\textwidth]{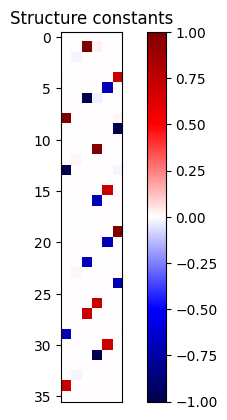}
    \end{subfigure}
    \begin{subfigure}{.117\textwidth}
        \includegraphics[width=\textwidth]{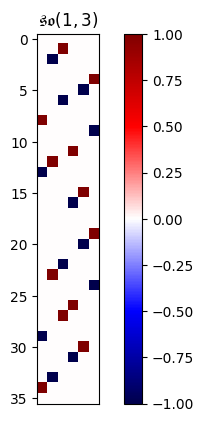}
    \end{subfigure}
    \caption{Left: Discovered 6-dimensional Lie algebra. Right: The structure constants of the discovered Lie algebra and the ground truth $\mathfrak{so}(1,3)$.}
    \label{fig:top-tagging}
\end{figure}

\subsection{Learning $\mathrm{SO}(2) \times \mathrm{SO}(2)$ Equivariant Representation}\label{sec:more-exp-db}

\begin{figure*}[h]
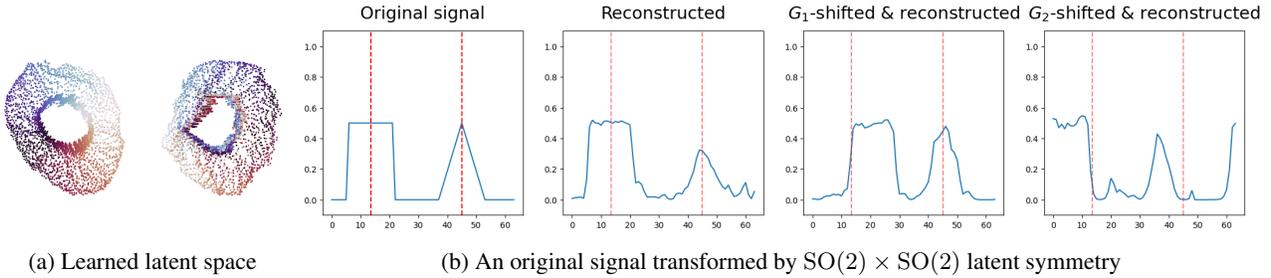

    \centering
    \begin{subfigure}{.23\textwidth}
        \includegraphics[width=\textwidth, viewport=0 -50 512 270, clip]{figs/db/rep.png}
        \caption{Learned latent space}
        \label{fig:db-rep-full}
    \end{subfigure}
    \begin{subfigure}{.75\textwidth}
        \includegraphics[width=\textwidth]{figs/db/recon.png}
        \caption{An original signal transformed by $\mathrm{SO}(2) \times \mathrm{SO}(2)$ latent symmetry}
        \label{fig:db-recon-full}
    \end{subfigure}
    \caption{Learning equivariant representation of the double-bump world. (a) Learned latent space as the direct sum of two 2D subspaces. The color of a data point corresponds to the location of the rectangular bump in the first component and the triangular bump in the second. (b) From left to right: (1) an original signal $x \in \mathbb R^{64}$; (2) reconstructed signal $\psi(\phi(x))$; (3-4) reconstructed signals from transformed latent representations, $\psi((\pi(\theta_1)\oplus I)\phi(x))$ and $\psi((I\oplus \pi(\theta_2))\phi(x))$. The red lines are the bump centers in the original signal.}
    \label{fig:db-full}
    \vspace{-3mm}
\end{figure*}

When linear group representation is given, we can use \ours{} for learning the corresponding group equivariant representation. We consider the example of a double-bump world in \cite{shakerinava2022structuring}. It consists of a rectangular and a triangular bump signal of length 16, both cyclically shifted in a window of length 64. The signal is visualized in \cref{fig:db}. The cyclic translation of each bump forms an $\mathrm{SO(2)}$ group. As each bump is shifted independently, the symmetry group for the composed signal is $\mathrm{SO(2)} \times \mathrm{SO(2)}$. Therefore, we use a 4-dimensional latent space $Z=\mathbb R^2 \oplus \mathbb R^2$ and fix the Lie algebra basis to $L = L_1 \oplus L_2$, $L_1 = L_2 = [0,\ 1;\ -1,\ 0]$.

Figure \ref{fig:db-full} also shows the latent space learned by \ours{}. We observe that rotation in the first component shifts the rectangular bump, while rotation in the second component simultaneously shifts both bumps. This is also evident from the transformed and reconstructed samples in Figure \ref{fig:db-recon-full}. This is an example that our method can learn equivariant representations when we do not know the group transformation of each data point.

\subsection{Learning $\mathrm{SO}(3)$ Equivariant Representation}\label{sec:supp-rot3d}

\begin{figure}[h]
    \centering
    \begin{subfigure}{.15\textwidth}
        \includegraphics[width=\textwidth]{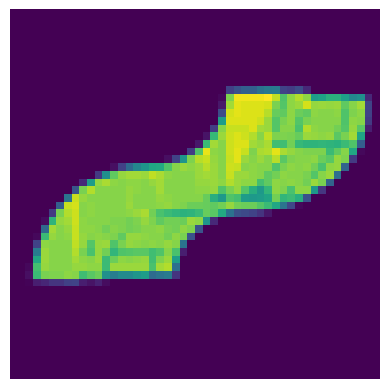}
    \end{subfigure}
    \begin{subfigure}{.15\textwidth}
        \includegraphics[width=\textwidth]{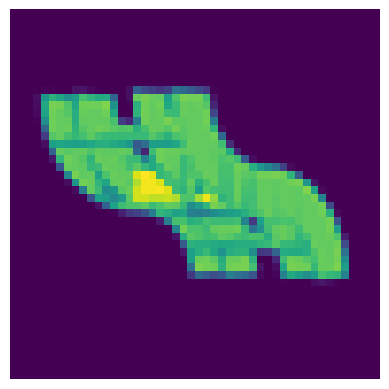}
    \end{subfigure}
    \begin{subfigure}{.15\textwidth}
        \includegraphics[width=\textwidth]{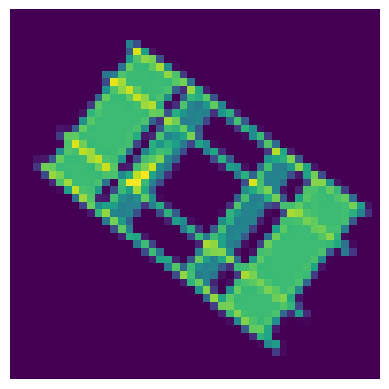}
    \end{subfigure}
    \hfill
    \begin{subfigure}{.15\textwidth}
        \includegraphics[width=\textwidth]{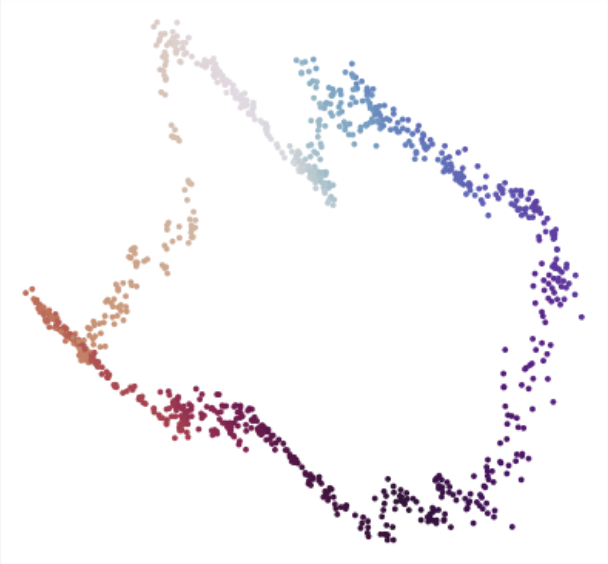}
    \end{subfigure}
    \begin{subfigure}{.15\textwidth}
        \includegraphics[width=\textwidth]{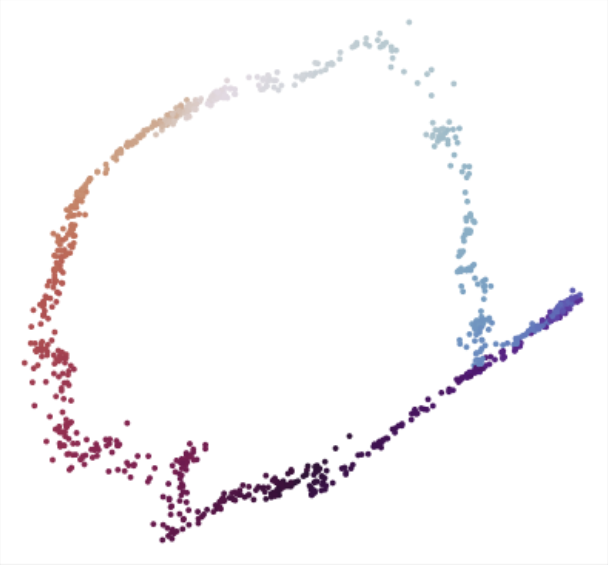}
    \end{subfigure}
    \begin{subfigure}{.15\textwidth}
        \includegraphics[width=\textwidth]{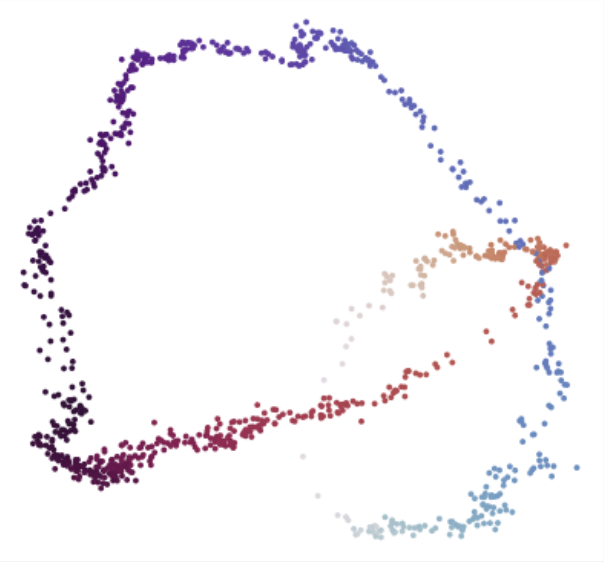}
    \end{subfigure}
    \caption{Left: the renderings of the object with three different orientations. Right: The object is rotated in three random axes from 0 to $2\pi$. The corresponding 2D images are embedded into a 3D latent space with \ours{}. For better visualization, the 3D latent representations are projected to 2D by PCA. The colors of the latent representations correspond to the rotation angles in $[0,2\pi]$. The mapping to latent space $\phi$ is continuous with respect to the $\mathrm{SO}(3)$ rotations, and each latent trajectory of rotations around a particular axis roughly forms a circular manifold. }
    \label{fig:rot3d}
\end{figure}

We present another example of learning equivariant representations from images. We consider a rotating bookshelf from ModelNet40 \citep{wu2015modelnet} and transform it in 3D through $\mathrm{SO}(3)$ rotations. The object is then rendered as a $48 \times 48$ image, which is the setting from \cite{shakerinava2022structuring}. Figure \ref{fig:rot3d} left displays the renderings of the object in three different orientations. The $\mathrm{SO}(3)$ action is nonlinear in the input space of 2D images. We use \ours{} to learn a latent space with 3 dimensions where the group action becomes linear.

Figure \ref{fig:rot3d} right shows three latent trajectories. Each trajectory is obtained by rotating the object around a randomly selected axis in 3D space. The colors of the latent representations correspond to the rotation angles in $[0,2\pi]$. The smooth transition of colors suggests that the mapping to latent space $\phi$ is continuous with respect to the $\mathrm{SO}(3)$ rotations. Also, each trajectory roughly forms a circular manifold.

We note that the trajectories are not in a perfect circular shape. For example, we observe that the latent representations overlap in some intervals. Concretely, given a particular rotation axis, let $x(\theta)$ denote the 2D rendering of the object with rotation angle $\theta$, and let $z(\theta) = \phi(x(\theta))$ denote its latent representation. In the 1st and 2nd latent trajectory shown in figure \ref{fig:rot3d}, it is observed that $z(\theta - \delta) \approx z(\theta + \delta)$ for some specific $\theta$ and small $\delta$'s. Also, in the 3rd trajectory, we have $z(\theta_1) \approx z(\theta_2)$ for some largly different $\theta_1$ and $\theta_2$. This can be caused by additional discrete symmetries in the object, where a transformation such as reflection or rotation up to $\pi$ leaves our view of the object unchanged. As our method is not provided with the group element associated with each object pose, it is unable to distinguish these identical inputs, so that they are mapped to the same location in the latent space and violate the overall circular structure. However, this kind of phenomenon does lead to an interesting question for future work: whether or not \ours{} can be extended to also discover these additional symmetries that are not caused by external transformations but lie in a real-world symmetric object itself.

\subsection{Learning the Latent Toroidal Structure of Flatland}\label{sec:flatland}

\begin{figure}[h]
    \centering
    \begin{subfigure}{.22\textwidth}
        \includegraphics[width=\textwidth]{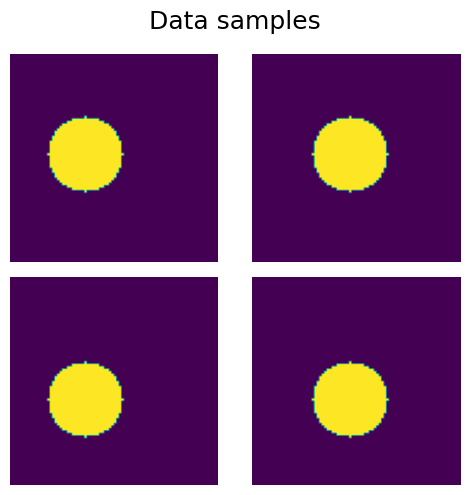}
    \end{subfigure}
    \begin{subfigure}{.75\textwidth}
        \includegraphics[width=\textwidth]{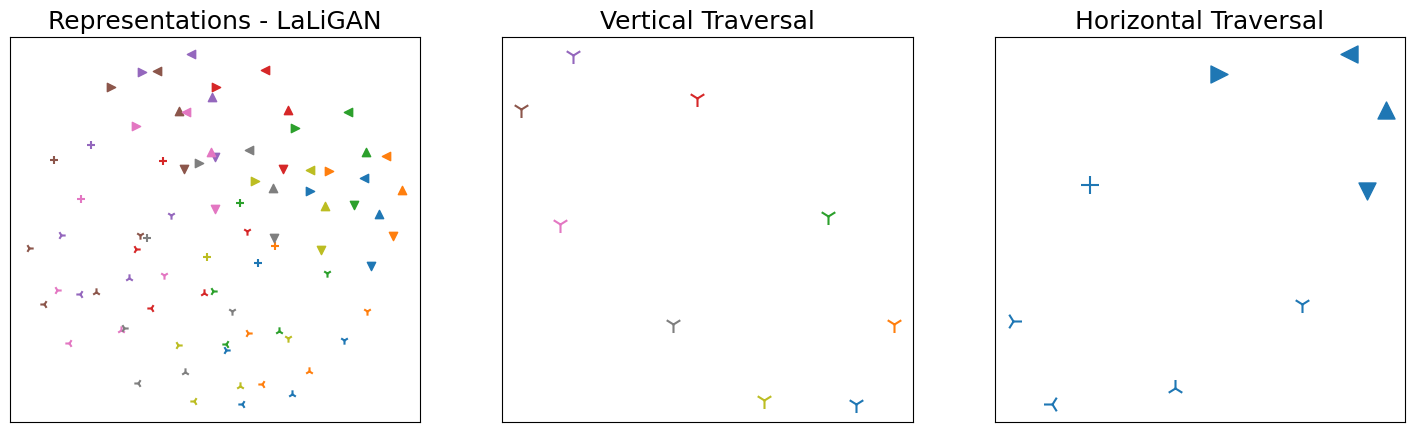}
    \end{subfigure}
    \caption{Left column: individual samples from the Flatland environment. Other columns: 2D projections of the 4D latent representations of equally spaced observations learned by \ours{}. Each marker style corresponds to a specific horizontal position of the ball. Each marker color corresponds to a specific vertical position. The latent space displays a toroidal structure, similar to the result in Figure 2 from \cite{quessard2020learning}.}
    \label{fig:flatland}
\end{figure}

Our method is related to symmetry-based disentangled representation learning \citep{higgins2018disentangled, caselles2019symmetry, quessard2020learning} in that both involves learning the group action. Though, our goal of symmetry discovery is intrinsically different from theirs. By symmetry, we refer to the equivariance of a function $f$, i.e. $f(gx)=gf(x)$. For example, $f$ can be the evolution function in a dynamical system: $x_{t+1}=f(x_t)$. But it would be more helpful to think of $f$ as an arbitrary function, e.g. an image classifier or a time series forecaster, written as $y=f(x)$. Then, our dataset $\{(x,y)\}$ consists of input-output pairs of this function. We discover the equivariance of the function from the input-output pairs. On the other hand, \citet{caselles2019symmetry} and \citet{quessard2020learning} use the group elements to describe the observational state transitions from $o_t$ to $o_{t+1}$. Their datasets are trajectories of $\{ o_0, g_0, o_1, g_1, ... \}$. They learn a map $f$ from observation $o \in W$ to latent $z \in Z$ that is equivariant between group actions on $W$ and $Z$.

For comparison, we consider a simple environment that is studied in these works, consisting of $84\times 84$ pixel observations of a ball moving in a plane. The world is cyclic, meaning that the ball will appear at the top if it crosses the bottom boundary, and similarly for left/right. The observations are shown in Figure \ref{fig:flatland} (left).

Unlike \citet{quessard2020learning} which considers sequences of observations $o$ and transformations $g$, $(o_0, g_0, o_1, g_1, ... )$, our goal is to discover the \textit{equivariance of a function}. Thus, we consider a function $o'=f(o)$ that simply translates the ball to the right and to the bottom by 15 pixels, respectively. An intuitive symmetry here is the cyclic translation equivariance along the two planar dimensions: if the input is translated by $g \in \mathrm{SO}(2) \times \mathrm{SO}(2)$, then the output will be translated by the same group element, i.e. $go' = f(go)$. In this experiment, we test whether \ours{} can discover a symmetry group of this function and a latent space where the group action becomes linear.

Following the setting in \citet{quessard2020learning}, we set the latent dimension to 4 and the search space of symmetries to $\mathrm{SO}(4)$. The discovered Lie algebra basis is
\begin{equation}
    L_1 = \begin{bmatrix}
        0 & 1.50 & -2.24 & 0 \\
        -1.50 & 0 & 0 & 0 \\
        2.24 & 0 & 0 & 0 \\
        0 & 0 & 0 & 0
    \end{bmatrix},\ 
    L_2 = \begin{bmatrix}
        0 & 0 & 0 & 0 \\
        0 & 0 & 0 & -4.25 \\
        0 & 0 & 0 & -2.86 \\
        0 & 4.25 & 2.86 & 0
    \end{bmatrix}
\end{equation}

It can be verified that this basis forms a valid Lie algebra that is closed under the Lie bracket. It is a commutative Lie algebra that matches the structure of $\mathrm{SO}(2) \times \mathrm{SO(2)}$. Note that we do not enforce any disentanglement in the learning process, so the latent dimensions are entangled. Disentanglement can be explicitly enforced by decomposing the latent space as independent subspaces as described in Section \ref{sec:latent-reg}, or promoted by encouraging the sparsity of the Lie algebra generators.

The toroidal structure of the latent space can be verified from Figure \ref{fig:flatland}. The visualization is obtained by projecting the 4D latent representations of equally spaced observations to 2D using Gaussian random projection. The marker colors and styles correspond to specific vertical and horizontal positions of the ball, respectively. It can be observed that all markers of a specific style, as well as all markers in a specific color, form a circular structure. For visual clarity, we also include two subsets: a vertical traversal along one column and a horizontal traversal along one row. This matches the result in Figure 2 from \citet{quessard2020learning}.

\subsection{Quantitative Evaluation of the Learned Symmetries}\label{sec:symm-eval}

\begin{table}[h]
    \centering
    \begin{tabular}{c|c|c|c}
    \hline
        Task & Symmetry & Equivariance error & Logit invariance error \\
    \hline
        \multirow{3}{*}{Reaction-Diffusion} & \ours{} & \textbf{1.02e-4} & \textbf{2.79e-3} \\
        & LieGAN & - & 3.11e-2 \\
        & $\mathrm{SO}(2)$ standard & 1.04e-4 & 2.84e-3 \\
    \hline
        \multirow{3}{*}{Lotka-Volterra} & \ours{} & \textbf{3.00e-2} & \textbf{5.21e-3} \\
        & LieGAN & 8.44e-2 & 4.05e-1 \\
        & $\mathrm{SO}(2)$ standard & 3.35e-2 & 5.68e-3 \\
    \hline
        \multirow{3}{*}{Pendulum} & \ours{} & \textbf{4.01e-3} & \textbf{5.33e-3} \\
        & LieGAN & 6.30e-3 & 2.11e-2 \\
        & $\mathrm{SO}(2)$ standard & 7.22e-3 & 1.57e-2 \\
    \hline
    \end{tabular}
    \caption{Quantitative metrics for the learned symmetries on test datasets.}
    \label{tab:quantitative-full}
\end{table}

In this section, we introduce some metrics to evaluate the discovered symmetries quantitatively. Recall that the symmetries are the equivariances of a function, i.e. $f(gx) = gf(x)$. Thus, a straightforward metric would be the equivariance error defined as

\begin{equation}
    EE_f = \mathbb E_{x,g} \|f(gx) - gf(x)\| ^2.
\end{equation}

Our symmetry discovery method is unsupervised and does not require fitting a function $f$. However, such a function can be fitted after discovering the symmetries, as is done in the dynamical system experiments. Concretely, the dataset consists of trajectories $\{x_{1:T}\}$, and the prediction function is $x_{t+1} = f(x_t)$. We use SINDy to learn symbolic equations $\dot z = h(z)$ (and therefore $z_{t+1} = H(z_t) = z_t + h(z_t)\Delta t$) in the latent space as shown in Table \ref{tab:eq-rd-3d}, \ref{tab:eq-pd} and \ref{tab:eq-lv}. Then, $x_{t+1}=f(x_t)=(\psi\circ H \circ\phi)(x_t)$, where $\phi$ and $\psi$ are the learned encoder and decoder. Using this function $f$, we can evaluate the equivariance error of the learned symmetries on the test datasets. For comparison, we include the symmetry learned by linear LieGAN \citep{liegan} in the input space (without autoencoder), where the function $f$ is the SINDy model trained in the input space (third row (SINDy) of Table \ref{tab:eq-pd} and \ref{tab:eq-lv}). Note that this result is unavailable for the high-dimensional reaction-diffusion system because we did not train SINDy on its input space. Besides, we use the same autoencoder but replace the representation learned \ours{} with a standard representation of $\mathrm{SO}(2)$, i.e. $L = [0,-1; 1,0] \in \mathbb R^{2 \times 2}$. Table \ref{tab:quantitative-full} shows that \ours{} reaches the lowest equivariance errors on all of the three dynamical systems.

Another quantitative metric is inspired by the logit invariance introduced in \citet{moskalev2023genuine}. For a classification task, we define the logit invariance error to measure the change of logits under group actions:
\begin{equation}
    LI_f = \mathbb E_{x,g} \frac{1}{2} \| f(x) - f(gx) \| ^2.
\end{equation}

Here, the function $f$ outputs the logits for classification. In our setting, there is not necessarily such a classification function. However, we can utilize the learned discriminator in \ours{}, which effectively classifies between the original data distribution and the transformed distribution by the symmetry generator. A good symmetry should lead to a small difference between these two distributions. Therefore, we define the discriminator logit invariance error as follows:
\begin{equation}
    DLI = \mathbb E_{v,g} \frac{1}{2} \| D(v) - D(gv) \| ^2
\end{equation}
where $v=(x,y)$ are the data points sampled from the dataset. Table \ref{tab:quantitative-full} shows that \ours{} has the lowest discriminator logit invariance error among the considered symmetries.

\section{Preliminaries on Lie Group Representations}\label{sec:lie-prelim}

A Lie group is both a group and a differentiable manifold. We use Lie groups to describe continuous symmetry transformations. For example, the rotations in $\mathbb R^n$ form a Lie group $\mathrm{SO}(n)$; all rotations, translations and reflections in $\mathbb R^n$ form the Euclidean group $\mathrm{E}(n)$. We also referred to general linear group $\mathrm{GL}(n;\mathbb R)$, which is the group of all $n \times n$ invertible matrices with real entries. As we only consider the field of real numbers in this work, we sometimes omit $\mathbb R$ and write $\mathrm{GL}(n)$ instead. We may also write $\mathrm{GL}(V)$, which is equivalent to $\mathrm{GL}(n; \mathbb R)$ if $V=\mathbb R^n$ is a vector space.

The tangent vector space at the identity group element is called the Lie algebra of the Lie group $G$, denoted as $\mathfrak g = T_\mathrm{id} G$. The Lie algebra of general linear group $\mathrm{GL}(n, \mathbb R)$ consists of all real-valued matrices of size $n \times n$. As Lie algebra is a vector space, we can use a basis $L_i \in \mathfrak g$ to describe any of its element as $A = \sum_{i=1}^cw_iL_i$, where $w_i \in \mathbb R$ and $c$ is the dimension of the vector space. Lie algebra can be interpreted as the space of infinitesimal transformations of the group. Group elements infinitesimally close to the identity can be written as $g = I + \sum_{i=1}^cw_iL_i$.

The exponential map $\exp: \mathfrak g \rightarrow G$ gives a mapping from the Lie algebra to the Lie group. For matrix Lie groups that we are considering, matrix exponential is such a map.

We are interested in how the data is transformed by group elements. Lie group, just like any other group, transforms the data from a vector space via a group action $\alpha: G\times V \rightarrow V$. If the action is linear, we call it a Lie group representation $\rho: G \rightarrow \mathrm{GL}(V)$, which acts on the vector space $V$ by matrix multiplication. Such a group representation induces a Lie algebra representation, $d\rho: \mathfrak g \rightarrow \mathfrak{gl}(V)$, which satisfies $\exp (d\rho (L)) = \rho (\exp(L)),\forall L \in \mathfrak g$.

Every matrix Lie group $G \leq \mathrm{GL}(n)$ has a standard representation, which is just the inclusion map of $G$ into $\mathrm{GL}(n)$. In our work, as we only consider these subgroups of general linear group, we learn the Lie group as its standard representation acting on $\mathbb R^n$ in the usual way. It is thus convenient to think of all group elements (and also Lie algebra elements) as $n \times n$ matrices, with the group operation given by matrix multiplication.
\section{Universal Approximation of Nonlinear Group Actions}
\subsection{Proofs}\label{sec:uanga}
In this section, we provide theoretical justifications for the decomposition of nonlinear group actions introduced in section \ref{sec:decomp}. We represent any nonlinear group action $\pi': G \times \mc{M} \rightarrow \mc{M}$ as
\begin{equation}
\pi'(g, \cdot) = \psi \circ \pi(g) \circ \phi \big|_\mc{M},
\label{eq:decomp_2}
\end{equation}
where $\phi: V \rightarrow Z$ and $\psi: Z \rightarrow V$ are functions parametrized by neural networks, and $\pi(g): G \rightarrow \mathrm{GL}(k)$ is a group representation acting on the latent vector space $Z=\mathbb R^k$.

\begin{proposition}
    $\pi'(g,\cdot) = \psi \circ \pi(g) \circ \phi \big|_\mc{M}$ is a group action on $\mc{M}$ if (1) $\phi \big|_\mc{M}$ is the right-inverse of $\psi$, and (2) the image of $\mc M$ under $\phi$ is invariant under the action $\pi$ of $G$, i.e. $G\phi[\mc M] = \phi[\mc M]$.
\end{proposition}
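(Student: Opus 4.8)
The plan is to verify directly that the two conditions imply the identity axiom $\pi'(e,\cdot)=\mathrm{id}_{\mc M}$ and the compatibility axiom $\pi'(g_1,\pi'(g_2,v))=\pi'(g_1g_2,v)$ for all $v\in\mc M$, treating $\pi$ as an honest linear representation (so $\pi(e)=I$ and $\pi(g_1)\pi(g_2)=\pi(g_1g_2)$) and using only the right-inverse property $\psi\circ\phi\big|_{\mc M}=\mathrm{id}_{\mc M}$ together with the invariance $G\phi[\mc M]=\phi[\mc M]$.

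For the identity axiom, I would compute $\pi'(e,v)=\psi(\pi(e)\phi(v))=\psi(\phi(v))=v$, where the middle equality uses $\pi(e)=I$ and the last uses condition (1). For compatibility, the key point is to control what happens in the latent space after applying $\psi$ and then $\phi$ again: we have $\pi'(g_1,\pi'(g_2,v))=\psi\big(\pi(g_1)\,\phi(\psi(\pi(g_2)\phi(v)))\big)$, so I need $\phi(\psi(\pi(g_2)\phi(v)))=\pi(g_2)\phi(v)$. This is where condition (2) enters: since $\phi(v)\in\phi[\mc M]$ and $\pi(g_2)$ preserves $\phi[\mc M]$, the point $z:=\pi(g_2)\phi(v)$ lies in $\phi[\mc M]$, i.e. $z=\phi(w)$ for some $w\in\mc M$; then $\phi(\psi(z))=\phi(\psi(\phi(w)))=\phi(w)=z$ by condition (1). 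Substituting back gives $\pi'(g_1,\pi'(g_2,v))=\psi(\pi(g_1)\pi(g_2)\phi(v))=\psi(\pi(g_1g_2)\phi(v))=\pi'(g_1g_2,v)$.

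I should also check the output stays in $\mc M$ so that $\pi'$ is genuinely a map $G\times\mc M\to\mc M$: again $\pi(g)\phi(v)\in\phi[\mc M]$ by invariance, so $\pi(g)\phi(v)=\phi(w)$ for some $w\in\mc M$, hence $\psi(\pi(g)\phi(v))=\psi(\phi(w))=w\in\mc M$. The only mild subtlety — and the one place I would be careful — is the seemingly circular use of $\psi\circ\phi=\mathrm{id}$ on $\mc M$ when the argument of $\psi$ is a transformed latent vector rather than $\phi(v)$ itself; the resolution is exactly that condition (2) guarantees every such transformed latent vector is \emph{of the form} $\phi(w)$ for an honest $w\in\mc M$, so the right-inverse identity applies. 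No smoothness, compactness, or freeness is needed here — those hypotheses belong to Theorem~\ref{thm:uanga}, not to this proposition, which is a purely algebraic statement.
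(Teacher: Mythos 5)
Your proof is correct and takes essentially the same route as the paper: verify the identity axiom from the right-inverse property, and reduce compatibility to the key identity $\phi(\psi(\pi(g)\phi(v)))=\pi(g)\phi(v)$, which both you and the paper derive from condition (2) forcing $\pi(g)\phi(v)\in\phi[\mc M]$. Your write-up is in fact marginally cleaner, since you obtain that identity by writing $\pi(g)\phi(v)=\phi(w)$ and applying condition (1) directly (the paper instead detours through injectivity of $\psi$ on $\phi[\mc M]$), and you additionally verify that $\pi'(g,v)\in\mc M$, a well-definedness point the paper leaves implicit.
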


\begin{proof}
We prove that $\pi'$ defined this way indeed satisfies the identity and the compatibility axioms of group action. The identity condition is obvious from the property of right-inverse:
\begin{equation}
    \pi'(e,x) = \psi(\phi(x)) = x
\end{equation}

As $G\phi[\mc M] = \phi[\mc M]$, for any $x \in \mc M$ and $g \in G$, $\exists x' \in \mc M$ s.t. $\pi(g)\phi(x) = \phi(x')$. Then, we can conclude that $\psi$ is injective when restricted to $G \phi[\mc M]$ from the right inverse property: $\psi(\phi(x_1)) = \psi(\phi(x_2)) \Rightarrow x_1 = x_2 \Rightarrow \phi(x_1) = \phi(x_2)$.

Then, denoting $z = \phi(x)$ and $gz = \pi(g)z$, we have $\psi(gz) = \psi(gz) 
 \Rightarrow \psi(\phi(\psi(gz))) = \psi(gz) \Rightarrow \phi(\psi(gz)) = gz$ for any $x \in \mc M$ and $g \in G$, and thus
\begin{align}
    \pi'(g_2, \pi'(g_1, x))&= \psi(\pi(g_2) \phi(g_1\cdot x))\cr 
    &=\psi(\pi(g_2) \phi(\psi(\pi(g_1) \phi(x)))) \cr 
    &=\psi(\pi(g_2) \pi(g_1) \phi(x)) \cr 
    &=\psi(\pi(g_2 g_1) \phi(x)) \cr 
    &=\pi'(g_2 g_1, x) 
\end{align}
\end{proof}

The following theorem states that our proposed decomposition and neural network parametrization can approximate nonlinear group actions under certain conditions.

\begin{theorem}[Universal Approximation of Nonlinear Group Action]\label{thm:uanga_2}
    Let $G\leq\mathrm{GL}(k; \mathbb R)$ be a compact Lie group that acts smoothly, freely and properly via a continuous group action $\pi':G\times \mc{M}\rightarrow \mc{M}$, where the data manifold $\mc{M}$ is a compact subset of $V=\mathbb R^n$. The group action, restricted to any bounded subset of the group, can be approximated by the decomposition $\pi'(g,\cdot)\approx\psi\circ\pi(g)\circ\phi$ if it admits a simply connected orbit space $\mc{M}/G$, where $\psi$ and $\phi$ are fixed arbitrary-width neural networks with one hidden layer, and $\pi$ is a linear group representation.
\end{theorem}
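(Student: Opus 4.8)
The plan is to realize the decomposition geometrically: first produce a continuous, $G$-equivariant embedding $\alpha$ of the data manifold into a linear $G$-space, then approximate $\alpha$ together with a continuous left inverse $\beta$ by arbitrary-width one-hidden-layer networks via the Universal Approximation Theorem, and finally chain the resulting estimates. The construction is designed so that the exact maps $\alpha,\beta$ satisfy the hypotheses of the preceding proposition (in particular $\pi(g)\,\alpha[\mathcal{M}] = \alpha[\mathcal{M}]$ since $\pi'(g,\cdot)$ permutes $\mathcal{M}$), which is why $\psi\circ\pi(g)\circ\phi$ is the right object to approximate.

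First I would set up the quotient. Since $G$ is compact and acts smoothly, freely and properly, the quotient manifold theorem makes $B := \mathcal{M}/G$ a smooth compact manifold and $q : \mathcal{M} \to B$ a smooth principal $G$-bundle. The simple-connectedness of $B$ is what I would use to trivialize this bundle, i.e.\ to produce a continuous equivariant ``coordinate'' map $\tau : \mathcal{M} \to G$ with $\tau(\pi'(g,v)) = g\,\tau(v)$; equivalently a global section $s : B \to \mathcal{M}$, via $s(b) = \pi'(\tau(v)^{-1}, v)$ for any $v \in q^{-1}(b)$, which is well defined because the action is free. Combining this with a Whitney embedding $\chi : B \hookrightarrow \mathbb{R}^m$ and the inclusion $G \leq \mathrm{GL}(k) \subseteq \mathbb{R}^{k\times k}$ — on which left multiplication by $G$ is \emph{linear} in the matrix entries — gives
\[
 \alpha : \mathcal{M} \longrightarrow Z := \mathbb{R}^m \oplus \mathbb{R}^{k\times k}, \qquad \alpha(v) = \big(\chi(q(v)),\, \tau(v)\big),
\]
where $G$ acts on $Z$ trivially on the first summand and by left multiplication on the second — a genuine linear representation $\pi$. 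One checks $\alpha$ is continuous, injective (freeness again), and equivariant, $\alpha(\pi'(g,v)) = \pi(g)\alpha(v)$; since $\mathcal{M}$ is compact and $Z$ Hausdorff, $\alpha$ is a topological embedding onto $\alpha[\mathcal{M}]$. Here the effective latent dimension is $m + k^2$, and this is the sense in which the ambient $\mathrm{GL}(k)$ in the statement must be taken large enough to contain the construction.

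Next I would pass to neural networks. As $\mathcal{M} \subseteq V = \mathbb{R}^n$ is compact, $\alpha$ extends to a continuous $\tilde\alpha : \mathbb{R}^n \to Z$ by the Tietze/Dugundji extension theorem, and the Universal Approximation Theorem furnishes a fixed one-hidden-layer network $\phi$ with $\sup_{v\in\mathcal{M}} \lVert \phi(v) - \alpha(v) \rVert < \varepsilon$. Likewise $\alpha^{-1} : \alpha[\mathcal{M}] \to \mathcal{M} \subseteq \mathbb{R}^n$ is continuous and extends to a continuous $\beta : Z \to \mathbb{R}^n$; for a bounded subset $K \subseteq G$ the set $\bigcup_{g\in K}\pi(g)\,(\text{closed }\delta\text{-neighbourhood of }\alpha[\mathcal{M}])$ is compact, so the Universal Approximation Theorem yields a one-hidden-layer network $\psi$ approximating $\beta$ uniformly there. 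Then for $v \in \mathcal{M}$ and $g \in K$ I would chain
\[
 \psi(\pi(g)\phi(v)) \approx \beta(\pi(g)\alpha(v)) = \beta(\alpha(\pi'(g,v))) = \pi'(g,v),
\]
where the first step uses $\phi\approx\alpha$, the (uniform over $K$) operator-norm bound on $\pi(g)$, the uniform continuity of $\beta$ on the compact set above, and $\psi\approx\beta$ on it; the second step uses equivariance of $\alpha$; the third uses $\beta|_{\alpha[\mathcal{M}]} = \alpha^{-1}$ and $\pi'(g,v)\in\mathcal{M}$. Propagating the $\varepsilon$'s — each controllable independently through the network widths — gives $\sup_{v\in\mathcal{M},\, g\in K}\lVert \psi(\pi(g)\phi(v)) - \pi'(g,v)\rVert$ arbitrarily small.

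I expect the main obstacle to be the topological step: extracting the equivariant trivialization $\tau : \mathcal{M}\to G$ (equivalently a global section of $q$) from simple-connectedness of $\mathcal{M}/G$. This is the delicate point, since in full generality a principal $G$-bundle over a simply connected base need not be trivial (the Hopf bundle $S^1\to S^3\to S^2$ being the standard counterexample), so one must either restrict the admissible groups/actions, exploit additional structure from $\mathcal{M}\subseteq\mathbb{R}^n$, or replace this step with a Mostow--Palais-type equivariant embedding obtained from the Peter--Weyl theorem, which produces $\alpha$ directly without a section. Everything else — taking the latent dimension large enough, and making each approximation estimate uniform in $g$ over the bounded set $K$ (which follows from compactness and the attendant equicontinuity) — is routine by comparison.
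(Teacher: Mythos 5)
Your proposal follows essentially the same route as the paper's proof. Your map $\alpha(v)=(\chi(q(v)),\tau(v))$ is exactly the paper's $\alpha:\pi'(g,s'(\tilde v))\mapsto(\mathrm{vec}(\rho(g)),t(\tilde v))$ (your $\tau$ is the group coordinate induced by the paper's global section $s'$, your $\chi$ is its Whitney embedding $t$), and your latent representation ``trivial on the orbit coordinates, left multiplication on the group coordinates'' is precisely the paper's $\pi(g)=(I_k\otimes\rho(g))\oplus I_p$. The Tietze extension and Universal Approximation steps are identical. The only substantive technical difference is how the final estimate is closed: you use uniform continuity of the exact inverse $\beta$ on an enlarged compact set $\bigcup_{g\in K}\pi(g)\bigl(\overline{N_\delta(\alpha[\mathcal{M}])}\bigr)$, whereas the paper uses Lipschitz continuity of the one-hidden-layer ReLU network $\psi$ together with the operator-norm bound on $\pi(g)$ over the bounded subset of $G$; both devices work and give the same uniform-in-$(v,g)$ conclusion.

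The obstacle you flag---obtaining the global section (equivalently the equivariant trivialization $\tau$) from simple connectedness of $\mathcal{M}/G$---is the one step you leave open, but it is also the step where the paper's own argument is weakest: the paper constructs $s'$ by unique path lifting, asserting that $\mathcal{M}$ is a universal cover of $\mathcal{M}/G$. For a positive-dimensional $G$ acting freely, the quotient map is a principal $G$-bundle whose fibers are copies of $G$, not a covering map, so unique path lifting is not available, and your Hopf-bundle caveat ($S^1\to S^3\to S^2$, with simply connected base but nontrivial bundle) is exactly the phenomenon that simple connectedness of the orbit space alone does not exclude. So your hesitation is well placed rather than a defect relative to the paper; your suggested repair via a Mostow--Palais / Peter--Weyl equivariant embedding would produce the required equivariant $\alpha$ directly (bypassing the section), at the cost of replacing the explicit ``group coordinate plus orbit coordinate'' latent structure with a possibly larger linear representation. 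With either the section assumed (as the paper effectively does) or that replacement made, the rest of your argument is correct and matches the paper's.
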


\begin{proof}
    We establish our theorem as a corollary of the Universal Approximation Theorem (UAT) \citep{uat}, which states that any continuous function $f\in \mathcal C (\mathbb R^n, \mathbb R^m)$ can be approximated by a one-hidden-layer arbitrary-width neural network. The intuition of this proof is to explicitly construct the mappings between input and latent space and ensure their continuity so that we can use UAT to approximate them with neural nets.

    The Quotient Manifold Theorem states that smooth, free, and proper group actions yield smooth manifolds as orbit spaces (\citet{lee2012smooth}, Theorem 21.10). More precisely, the orbit space $\mc{M}/G$ has a unique smooth structure with a smooth submersion quotient map $s: \mc{M} \rightarrow \mc{M}/G$. Also, given that $\mc{M}/G$ is simply connected, we can find a global continuous section $s': \mc{M}/G \rightarrow \mc{M}$ s.t. $s' \circ s$ is identity restricted on $S = \mathrm{img}_{s'} (\mc{M}/G)$. The global section can be constructed by:
    \begin{enumerate}
        \item Fix a base point $p \in \mc{M}/G$ and choose a point $\tilde p$ in the pre-image of $p$ under $s$, i.e. $s(\tilde p) = p$.
        \item For any other point $q \in \mc{M}/G$, choose a path $\gamma$ in $\mc{M}/G$ from $p$ to $q$.
        \item As $\mc{M}/G$ is simply connected, $\mc{M}$ is a universal cover of $\mc{M}/G$, so that any path $\gamma$ in $\mc{M}/G$ can be uniquely lifted to a path $\tilde\gamma$ in $\mc{M}$ which starts at $\tilde p$ and ends at $\tilde q$.
        \item Define the section as $s': \mc{M}/G \rightarrow \mc{M}, q \mapsto \tilde q$.
    \end{enumerate}

    In addition, according to Whitney Embedding Theorem, the smooth manifold $\mc{M}/G$ can be smoothly embedded in a higher-dimensional Euclidean space. Denote $t: \mc{M}/G \rightarrow \mathbb R^p$ as one of the possible embeddings. We do not restrict the exact dimensionality of such an Euclidean space, as long as it enables us to represent any orbit with part of the latent space.
    

    Before defining the mapping from input to latent, we finally note that
    as $G\leq\mathrm{GL}(k; \mathbb R)$, we have a standard representation $\rho: G\rightarrow\mathbb R^{k\times k}$.

    Now we define $\alpha:\mc{M} \rightarrow \mathbb R^{k^2+p}, \pi'(g, s'(\tilde v)) \mapsto \text{concat} (\mathrm{vec}(\rho(g)), t(\tilde v)), \forall \tilde v \in \mc{M}/G, g\in G$, and we verify that this function is well defined.
    
    First, $\mc{M}=\{\pi'(g, s'(\tilde v)) | \tilde v \in \mc{M}/G, g \in G\}$, so that $\alpha(v)$ is defined for any $v\in \mc{M}$.
    
    Then, we need to make sure any $v\in \mc{M}$ is written uniquely in the form of $v=\pi'(g) \tilde v$. $\forall \tilde {x_i}\not= \tilde{x_j}, g_i, g_j \in G, \pi'(g_i) \tilde {x_i} \not= \pi'(g_j) \tilde {x_j}$, because any two orbits never overlap in $\mc{M}$.
    
    Also, $\forall g_1,g_2\in G, g_1\not= g_2$, as $\pi'$ acts freely, we have $\pi'(g_1) \tilde v\not=\pi'(g_2) \tilde v$.

    Next, we prove that $\alpha$ defined this way is also continuous. As the value of $\alpha$ is concatenated from two parts, it suffices to check the continuity for each component, i.e. $\alpha_1(\pi'(g, s'(\tilde v))) = \mathrm{vec}(\rho(g))$ and $\alpha_2(\pi'(g, s'(\tilde v))) = t(\tilde v)$.

    For any open set $t(\tilde V) \subset \mathbb R^p$, where $\tilde V \subset \mc{M} / G$, the continuity of $t$ and $s$ guarantees that the inverse image, $(t \circ s)^{-1} t(\tilde V) = s^{-1} (\tilde V)$, is an open set. As $(s|_S)^{-1}=s'$, $s'(\tilde V)$ is an open set. The $\alpha_2$ inverse image of $t(\tilde V)$ is $\bigcup_{g \in G}\pi'(g, s'(\tilde V))$. Note that $\forall g \in G, \pi'(g^{-1}, \cdot): \mc{M} \rightarrow \mc{M}$ is continuous, so that $\pi'(g, s'(\tilde V))$ is open. Therefore, the $\alpha_2$ inverse image of any open set $t(\tilde V)$ is a union of open sets, which is also open, so that $\alpha_2$ is continuous.

    Similarly, for any open set $\mathrm{vec}(\rho(U)) \in \mathbb R^k$, $U$ is an open set given the continuity of the standard representation $\rho$ and the vectorization operation. The $\alpha_1$ inverse image of $\mathrm{vec}(\rho(U))$ is $\bigcup_{s'(\tilde v)} \pi'(U, s'(\tilde v))$. As the action of $G$ on $\mc{M}$ is free, i.e. the stabilizer subgroup is trivial for all $v \in \mc{M}$, we have $\pi'(\cdot, v): G \rightarrow \mc{M}$ is an injective continuous map, so that its image of an open set is still open. Thus, we conclude that $\alpha_1$ is also continuous.
    
    Given that the data manifold $\mc{M}$ is a closed subset of the ambient space $V=\mathbb R^n$, the Tietze extension theorem \cite{dugundji1951extension} ensures that $\alpha: \mc{M} \rightarrow \mathbb R^{k^2+p}$ can be continuously extended to a function on $V$. According to the Universal Approximation Theorem, there exists a one-hidden-layer arbitrary-width neural network $\phi$ that approximates the continuous extension of $\alpha$.

    Then, we define $\pi(g) = (I_k \otimes \rho(g)) \oplus I_p$. For some $z_0 = (\mathrm{vec}(\rho(g_0)), t(\tilde v_0))$ in the image of $\alpha$, we have
    \begin{align*}
        \pi(g)z_0 =& ((I_k \otimes \rho(g))\mathrm{vec}(\rho(g_0)), t(\tilde v_0)) \\
        =& (\mathrm{vec}(\rho(g)\rho(g_0)), t(\tilde v_0)) \\
        =& (\mathrm{vec}(\rho(gg_0)), t(\tilde v_0))
    \end{align*}
    
    Finally, we define another mapping $\beta$ on $GZ = \bigcup_{g \in G, z \in Z}\pi(g)z$, where $Z$ is the image of $\alpha$, as $\beta: (\mathrm{vec}(\rho(g)), t(\tilde v)) \mapsto \pi'(g, s'(\tilde v)), \forall \tilde v \in \mc{M}/G, g\in G$. It is well-defined because $\mathrm{vec} \circ \rho$ is injective on $G$, and also continuous because it is the inverse of $\alpha$. Similarly, we need to extend the function on $GZ$ to the entire vector space. Because $\mc M$ is a compact set, its image $Z$ under the continuous function $\alpha$ is also compact (and therefore closed) in $\R^{k^2+p}$. Then, the proper action of the compact group $G$ ensures that the image of the group action, i.e. $GZ$ is also compact. Thus, we can continuously extend $\beta$ from $GZ$ to $\R^{k^2+p}$. According to the Universal Approximation Theorem, there exists another neural network $\psi$ that approximates $\beta$.

    Finally, defining $\alpha,\pi,\beta$ as above, for any $v=\pi'(g', s'(\tilde v)) \in \mc{M}$ and $g$ in any bounded subset of $G$, we have
    \begin{align*}
        \pi'(g,v)=&\pi'(g, \pi'(g', s'(\tilde v)))\\
                  =&\pi'(gg', s'(\tilde v))\\
                  =&\beta(\mathrm{vec}(\rho(gg')), t(\tilde v))\\
                  =&(\beta\circ\pi(g))(\mathrm{vec}(\rho(g')), t(\tilde v))\\
                  =&(\beta\circ\pi(g)\circ\alpha)(\pi'(g', s'(\tilde v))\\
                  =&(\beta\circ\pi(g)\circ\alpha)(v) \\
                  \approx&(\psi\circ\pi(g)\circ\phi)(v)
    \end{align*}

    The final step relies on the fact that the neural network approximator $\psi$ and the group representation $\pi(g)$ are Lipschitz continuous. Concretely, it requires $\|\psi(z_1)-\psi(z_2)\| \leq K \| z_1 - z_2 \|,\ \forall z_1, z_2,$ for some positive constant $K$ and similarly for $\pi(g)$ as a function over $Z=\mathrm{img}(\alpha)$. This is true for a one-layer neural network with ReLU activation, and also for $\pi(g)$ for any $g$ in a bounded subset of the group, because $\pi(g)$ is a bounded linear transformation.

    Then, according to the UAT, for any $\epsilon > 0$, there exist neural networks $\psi$ and $\phi$ and positive constant $K$ s.t.
    \begin{align*}
        &\sup_{v \in \mc{M}} \|(\psi\circ\pi(g)\circ\phi)(v) - (\beta\circ\pi(g)\circ\alpha)(v)\| \\
        \leq & \sup_{v \in \mc{M}} \|(\psi\circ\pi(g)\circ\phi)(v) - (\psi\circ\pi(g)\circ\alpha)(v)\| + \|(\psi\circ\pi(g)\circ\alpha)(v) - (\beta\circ\pi(g)\circ\alpha)(v)\| \\
        \leq & \sup_{v \in \mc{M}} K\|(\pi(g)\circ\phi)(v) - (\pi(g)\circ\alpha)(v)\| + \epsilon \\
        \leq & \sup_{v \in \mc{M}} K^2 \| \phi(v) - \alpha(v) \| + \epsilon \\
        \leq & (K^2 + 1) \epsilon
    \end{align*}
    which translates to
    \begin{align*}
        (\beta\circ\pi(g)\circ\alpha)(v) \approx (\psi\circ\pi(g)\circ\phi)(v)
    \end{align*}
    
    
\end{proof}

\subsection{Group Action Under Approximate Inverse}\label{sec:g-action-approx}

In practice, the networks $\phi$ and $\psi$ are trained with a reconstruction loss. As the loss is not strictly zero, they are only approximate but not perfect inverses of each other. As a result, the condition in Proposition \ref{prop:nga} cannot be strictly true. However, we can show empirically that when the reconstruction loss is reasonably close to zero, the decomposition in Proposition \ref{prop:nga} leads to an approximate group action. We use the reaction-diffusion system for demonstration.

A group action needs to satisfy the identity and compatibility axioms. We evaluate the error in terms of these axioms caused by the imperfect encoder and decoder networks. First, the error with respect to the identity axiom can be directly described by the reconstruction loss:
\begin{equation}
    \mathrm{err_{id}} = \mathbb E_x \| \pi'(e,x) - x \| ^2 = \mathbb E_x \| \psi(\phi(x)) - x \| ^2 = l_\text{recon}.
\end{equation}
In the reaction-diffusion experiment, the test reconstruction loss is $2.58 \times 10^{-3}$, which indicates the autoencoder networks approximately satisfy the identity axiom.

Then, we consider the compatibility error. We sample a random group element $g$ from the generator and calculate $g^N$. Then, we apply $\pi'(g)^N = (\psi \circ \pi(g) \circ \phi)^N$ and $\pi'(g^N) = \psi \circ \pi(g^N) \circ \phi$ to the test dataset. The compatibility error is computed as
\begin{equation}
    \mathrm{err_{comp}} = \mathbb E_x \| \pi'(g)^N(x) - \pi'(g^N)(x) \|^2.
\end{equation}

\begin{figure}[ht]
    \centering
    \begin{subfigure}{.68\textwidth}
        \includegraphics[width=\textwidth]{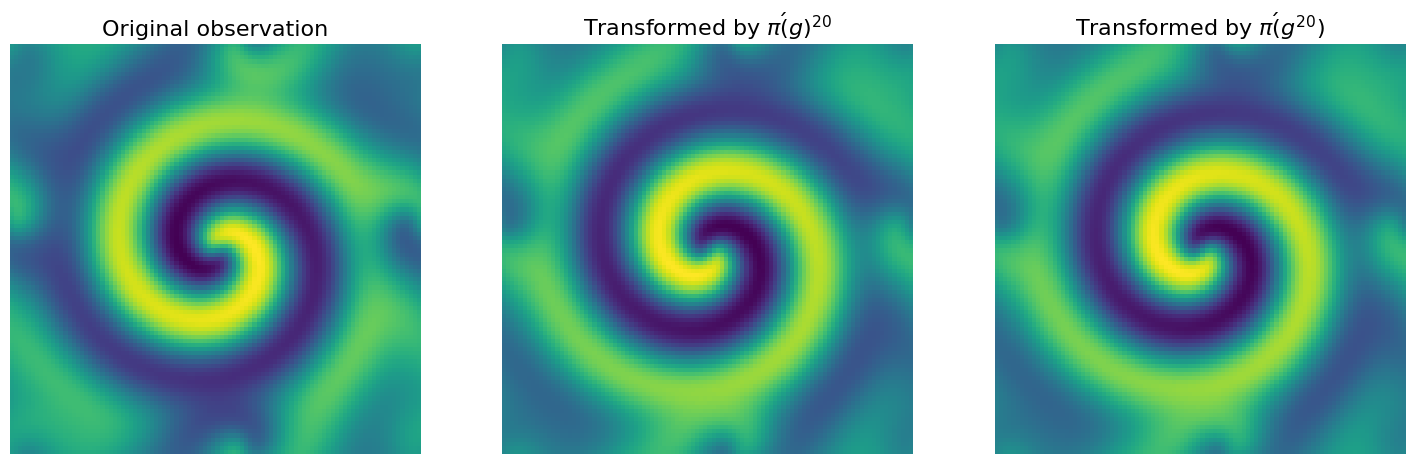}
        \caption{An observation (left) transformed by $\pi'(g)^{20}$ (middle) and $\pi'(g^{20})$ (right)}
        \label{fig:nga-vrf-vis}
    \end{subfigure}
    \begin{subfigure}{.3\textwidth}
        \includegraphics[width=\textwidth]{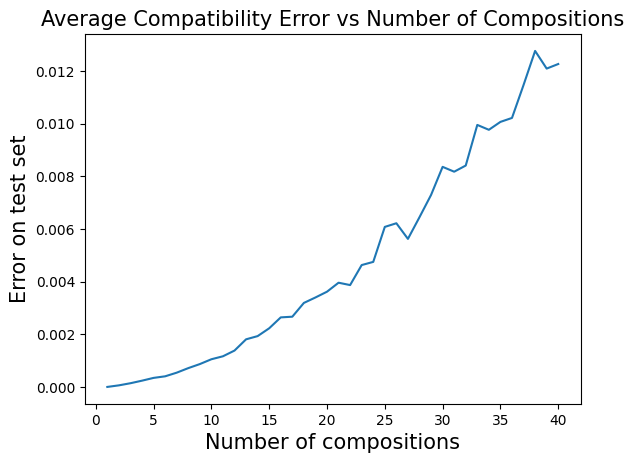}
        \caption{Compatibility error vs $N$}
        \label{fig:nga-vrf-curve}
    \end{subfigure}
    \caption{Group compatibility error caused by imperfect encoder and decoder networks.}
    \label{fig:nga-vrf}
\end{figure}

Figure \ref{fig:nga-vrf-vis} shows a sample from test set transformed by $\pi'(g)^{20}$, that is, passed through the encoder, the linear representation $\pi(g)$ and the decoder for 20 times, and by $\pi'(g^{20})$, that is, passed through the autoencoder and the linear representation $\pi(g^{20})$ once. The two transformations have the same effect visually, which indicates that the autoencoder networks approximately satisfy the compatibility axiom. Further evidence is provided in Figure \ref{fig:nga-vrf-curve}, where we use the number of compositions $N \in [2, 40]$ and plot the growth of compatibility error with the increase of $N$. The error remains low ($\approx 1 \times 10^{-2}$) up to 40 times of group element composition.

\subsection{Notes on Latent Regularizations}
In \cref{sec:latent-structure}, we introduced two strategies to regularize the latent space for easier discovery of symmetries. We note that our model is still able to learn all the desired symmetries after applying these regularizations.

First, the orthogonal parametrization would not remove symmetry. If we have an encoder with non-orthogonal final layer $W$, we can apply the Gram-Schmidt process to get an orthogonal weight $Q = PW$, which is effectively a change of basis in the latent space. If the original encoder weight $W$ leads to a latent space with linear symmetry, we can reconstruct the symmetry with a different group representation based on the change of basis. Thus, an orthogonal final layer suffices to learn all desired symmetries.

\begin{wrapfigure}{r}{6cm}
    \centering
    \includegraphics[width=.3\textwidth]{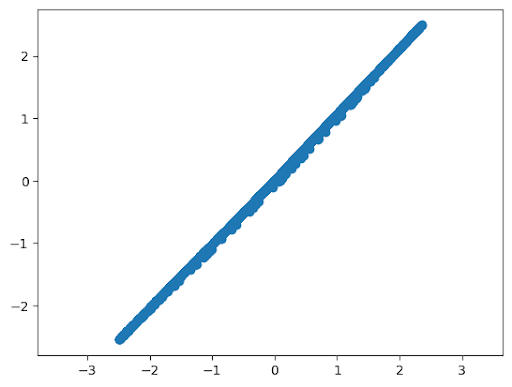}
    \caption{Latent embeddings of the observations $x \in \R^{100 \times 100}$ of the reaction-diffusion system without orthogonal parametrization in the encoder.}
    \label{fig:rd-fail-1dlatent}
\end{wrapfigure}

Also, the zero-mean normalization does not affect the symmetries of general linear groups. As is stated in \cref{sec:latent-structure}, the linear group representation $v \mapsto \pi(g)v$ implies that the vector space should be centered at the origin. Otherwise, the group transformations will transfer the data distribution to a new center, and the resulting distribution cannot be the same as the original distribution, leading to failure in symmetry discovery.

We also provide an additional example here to demonstrate how the orthogonal parametrization can be helpful in practice. We consider the reaction-diffusion system in \cref{sec:exp-dynamics} (\cref{fig:rd-2d}). We fix all other settings and only change the orthogonal final layer in the encoder to a regular linear layer. The discovered latent space is shown in \cref{fig:rd-fail-1dlatent}. Instead of the circular shape in \cref{fig:rd-latent-2d}, the latent representations collapse into a 1D line segment $z_1=z_0$, oscillating between two endpoints. The discovered symmetry generator has roughly the form $L=\begin{bmatrix}
    a & -a \\ -b & b
\end{bmatrix}$, which acts (approximately) as identity restricted to this 1D subspace.

\section{Experiment Details}
In this section, we provide the detailed hyperparameter settings and dataset generation procedure for the experiments.

\subsection{Reaction-Diffusion}
We use the script from SINDy Autoencoder \footnote{\href{https://github.com/kpchamp/SindyAutoencoders/tree/master/rd\_solver}{https://github.com/kpchamp/SindyAutoencoders/tree/master/rd\_solver}} to generate the dataset. We discretize the 2D space into a $100\times100$ grid, which leads to an input dimension of $10^4$. We simulate the system up to $T=6000$ timesteps with step size $\Delta t=0.05$. Then, we add random Gaussian noises with standard deviation $10^{-6}$ to each pixel and at each timestep. We use the timesteps $t \in [0, 4800)$ for training \ours{} and SINDy. For long-term forecasting, we use the timestep $t = 4800$ as the initial input frame and simulate up to $600$ timesteps with each method. The simulations are then compared with the ground truth during $t \in [4800, 5400)$ to calculate the relative MSE.

We use MLPs with 5 hidden layers and 512 hidden units as the encoder, the decoder and the discriminator. We also use orthogonal parametrization for the final linear layer of the encoder, which is discussed in Section \ref{sec:latent-reg}. The dimension of the Lie algebra in the LieGAN generator is set to one. We use a standard Gaussian as the distribution of the coefficient $w$ in the LieGAN generator.

For the 2D latent space symmetry discovery, we train for 150 epochs with batch size 64. The learning rates for the autoencoder, the generator and the discriminator are $0.0003, 0.001, 0.001$, respectively. The weights of the reconstruction loss and the GAN loss are set to $w_\text{recon}=1$ and $w_\text{GAN}=0.01$. As in LieGAN, we also include a regularization loss term $l_\text{reg}$ for LieGAN generator, which pushes the Lie algebra basis away from zero, and the weight for the regularization is set to $w_\text{reg}=0.1$. We also apply sequential thresholding to the LieGAN generator parameters. Every 5 epochs, matrix entries with absolute values less than $0.01$ times the max absolute values across all entries are set to 0. For the 3D latent space, the settings are the same as above except that we train for 300 epochs.


\subsection{Nonlinear Pendulum}
We simulate the movement of nonlinear pendulum according to the governing equation, $\dot q = p, \dot p = =-\sin (q)$. For training, we simulate $200$ trajectories up to $T=500$ timesteps with $\Delta t=0.02$ with random initial conditions. For testing, we simulate another $20$ trajectories. The initial conditions are sampled uniformly from $q_0 \in [-\pi, \pi]$ and $p_0 \in [-2.1, 2.1]$. Also, we ensure that $\mathcal H=\frac{1}{2}p^2 - \cos (q) < 0.99$, so that it does not lead to a circular movement.

We use MLPs with 5 hidden layers and 512 hidden units as the encoder, the decoder and the discriminator. We also use orthogonal parametrization for the final linear layer of the encoder and batch normalization before the transformation of the symmetry generator, as discussed in Section \ref{sec:latent-reg}. The dimension of the Lie algebra in the LieGAN generator is set to one. We use a standard Gaussian as the distribution of the coefficient $w$ in the LieGAN generator.

We train for 70 epochs with batch size 256. The learning rate for the autoencoder, the generator and the discriminator are all $0.001$. The weights of the reconstruction loss and the GAN loss are set to $w_\text{recon}=1$ and $w_\text{GAN}=0.01$. The weight for the LieGAN regularization is set to $w_\text{reg}=0.02$. We also apply sequential thresholding to the LieGAN generator parameters. Every 5 epochs, matrix entries with absolute values less than $0.3$ times the max absolute values across all entries are set to 0.


\subsection{Lotka-Volterra Equations}
We simulate the Lotka-Volterra equations in its canonical form, $\dot p = a - b e ^ q, \dot q = c e ^ p - d$, with $a = 2/3, b = 4/3, c=d=1$. For training, we simulate $200$ trajectories up to $T=10000$ timesteps with $\Delta t=0.002$ with random initial conditions. For testing, we simulate another $20$ trajectories. The initial conditions are sampled by first sampling $x_0 = e ^ {p_0}$ and $y = e ^ {q_0}$ uniformly from $[0, 1]$ and then computing $p_0 = \log x_0$ and $q_0 = \log y_0$. Also, we ensure that the Hamiltonian of the system given by $\mathcal H=c e ^ p - d p + b e ^ q - a q$ falls in the range of $[3, 4.5]$.

For all the experiments, we use MLPs with 5 hidden layers and 512 hidden units as the encoder, the decoder and the discriminator. We also use orthogonal parametrization for the final linear layer of the encoder and batch normalization before the transformation of the symmetry generator, as discussed in Section \ref{sec:latent-reg}. The dimension of the Lie algebra in the LieGAN generator is set to one. We use a standard Gaussian as the distribution of the coefficient $w$ in the LieGAN generator.

We train for 30 epochs with batch size 8192. The learning rate for the autoencoder, the generator and the discriminator are all $0.001$. The weights of the reconstruction loss and the GAN loss are set to $w_\text{recon}=1$ and $w_\text{GAN}=0.01$. The weight for the LieGAN regularization is set to $w_\text{reg}=0.01$. We also apply sequential thresholding to the LieGAN generator parameters. Every 5 epochs, matrix entries with absolute values less than $0.3$ times the max absolute values across all entries are set to 0.


\subsection{Double Bump}
The signal length is set to 64, so that we have observations $x \in \mathbb R^{64}$. The rectangular and the triangular bump signals both have the length 16. For each sample, we randomly sample a shift $(\Delta_1,\Delta_2)$, where $\Delta_i$ is an integer in $[0, 64)$. The two bump signals are then cyclically shifted and superimposed. We sample $10000$ signals for training and another $1000$ for test.

We use a 1D convolution architecture for autoencoder. The encoder consists of three 1D convolution layers, with the numbers of input channels 1, 16, 32 and the final number of output channels 64, kernel size 3, stride 1 and padding 1, each followed by ReLU activation and a 1D max pooling layer with kernel size 2 and stride 2. The output of the final convolution is flattened and fed into an MLP with 2 hidden layers with 128 and 32 hidden units, and 4 output dimensions. The decoder structure is the reverse of the encoder structure, It consists of a 2-layer MLP with 32 and 128 hidden units, and 512 output dimensions. The MLP output is reshaped into 64 channels with size 8. Then three transposed convolution layers with output channels 32, 16, 1, kernel size 3, stride 2, input padding 1 and output padding 1 are applied. The final output passes through a sigmoid activation to ensure the output range is in $(0,1)$. We use MLPs with 4 hidden layers and 128 hidden units as the discriminator. We also use orthogonal parametrization for the final linear layer of the encoder, as discussed in Section \ref{sec:latent-reg}. The Lie algebra basis in the LieGAN generator is fixed to the standard representation of $\mathrm{SO}(2) \times \mathrm{SO}(2)$.

We train for 2000 epochs with batch size 64. The learning rate for the autoencoder and the discriminator are both $0.001$. The weights of the reconstruction loss and the GAN loss are set to $w_\text{recon}=1$ and $w_\text{GAN}=0.01$.


\end{document}